\documentclass{article}
\usepackage[T1]{fontenc}
\usepackage[utf8]{inputenc}

\usepackage{microtype}
\usepackage{graphicx}
\usepackage{subcaption}
\usepackage{booktabs} % for professional tables

\usepackage{multirow} % For merging rows
\usepackage{graphicx} % For resizing tables if necessary
\usepackage{float}

\usepackage{hyperref}

\usepackage[accepted]{icml2026}

\usepackage{amsmath}
\usepackage{amssymb}
\usepackage{mathtools}
\usepackage{amsthm}
\usepackage{algorithm}
\usepackage{algorithmic}
\usepackage{tcolorbox}

\usepackage[capitalize,noabbrev]{cleveref}

\theoremstyle{plain}
\newtheorem{theorem}{Theorem}[section]

\newtheorem{lemma}[theorem]{Lemma}

\theoremstyle{definition}
\newtheorem{definition}[theorem]{Definition}
\newtheorem{assumption}[theorem]{Assumption}
\theoremstyle{remark}

\newtheorem{hypothesis}[theorem]{Hypothesis}

\newtcolorbox{resultbox}{
    colback=gray!5,      % Very light gray background (change to 'white' if preferred)
    colframe=black!60,   % Dark gray border
    boxrule=1.0pt,       % Border thickness
    arc=3mm,             % Rounded corners radius
    left=4pt, right=4pt, top=4pt, bottom=4pt, % Internal padding
    boxsep=0pt,
    fontupper=\small     % Font size (adjust as needed)
}

\usepackage[textsize=tiny]{todonotes}

\icmltitlerunning{Stubborn Hallucination: EPGS}

\begin{document}

\twocolumn[
  \icmltitle{From Flat Facts to Sharp Hallucinations: Detecting Stubborn Errors via Gradient Sensitivity}

  % It is OKAY to include author information, even for blind submissions: the
  % style file will automatically remove it for you unless you've provided
  % the [accepted] option to the icml2026 package.

  % List of affiliations: The first argument should be a (short) identifier you
  % will use later to specify author affiliations Academic affiliations
  % should list Department, University, City, Region, Country Industry
  % affiliations should list Company, City, Region, Country

  % You can specify symbols, otherwise they are numbered in order. Ideally, you
  % should not use this facility. Affiliations will be numbered in order of
  % appearance and this is the preferred way.
% Optional: Use this if you need to denote equal contribution
% \icmlsetsymbol{equal}{*}

\begin{icmlauthorlist}
\icmlauthor{Yee Zhing Liew}{sime,uol}
\icmlauthor{Andrew Huey Ping Tan}{sime}
\icmlauthor{Anwar P.P. Abdul Majeed}{sunway,robotics}
\end{icmlauthorlist}

% Affiliations
\icmlaffiliation{sime}{School of Intelligent Manufacturing Ecosystem, Xi’an Jiaotong-Liverpool University, People’s Republic of China}
\icmlaffiliation{uol}{Department of Computer Science, University of Liverpool, United Kingdom}
\icmlaffiliation{sunway}{Faculty of Engineering and Technology, Sunway University, Malaysia}
\icmlaffiliation{robotics}{School of Robotics, Xi’an Jiaotong-Liverpool University, People’s Republic of China}

% Corresponding Authors
\icmlcorrespondingauthor{Yee Zhing Liew}{yeezhing.liew23@student.xjtlu.edu.cn}

  % You may provide any keywords that you find helpful for describing your
  % paper; these are used to populate the "keywords" metadata in the PDF but
  % will not be shown in the document
  \icmlkeywords{Machine Learning, ICML}

  \vskip 0.3in
]

% this must go after the closing bracket ] following \twocolumn[ ...

% This command actually creates the footnote in the first column listing the
% affiliations and the copyright notice. The command takes one argument, which
% is text to display at the start of the footnote. The \icmlEqualContribution
% command is standard text for equal contribution. Remove it (just {}) if you
% do not need this facility.

% Use ONE of the following lines. DO NOT remove the command.
% If you have no special notice, KEEP empty braces:
\printAffiliationsAndNotice{}  % no special notice (required even if empty)
% Or, if applicable, use the standard equal contribution text:
% \printAffiliationsAndNotice{\icmlEqualContribution}

\begin{abstract}
    Traditional hallucination detection fails on ``Stubborn Hallucinations'' — errors where LLMs are confidently wrong. We propose a geometric solution: Embedding-Perturbed Gradient Sensitivity (EPGS). We hypothesize that while robust facts reside in flat minima, stubborn hallucinations sit in sharp minima, supported by brittle memorization. EPGS detects this sharpness by perturbing input embeddings with Gaussian noise and measuring the resulting spike in gradient magnitude. This acts as an efficient proxy for the Hessian spectrum, differentiating stable knowledge from unstable memorization. Our experiments show that EPGS significantly outperforms entropy-based and representation-based baselines, providing a robust signal for detecting high-confidence factual errors.
\end{abstract}

\section{Introduction}
\label{sec:intro}

Large Language Models (LLMs) have catalyzed a paradigm shift in artificial intelligence and enabled remarkable proficiency across generative and reasoning tasks. However, the deployment of these models in critical domains is severely hindered by the phenomenon of \textbf{hallucination}. This term refers to the generation of content that is fluent and coherent but factually incorrect~\cite{Ji_2023, zhang-etal-2025-sirens}. While significant efforts have been directed toward mitigation strategies, the persistence of these errors in state-of-the-art models~\cite{kalai2025languagemodelshallucinate} underscores the urgent need for reliable post-hoc detection mechanisms.

Current approaches to hallucination detection predominantly frame the problem through the lens of predictive uncertainty. Black-box methods operate on the assumption that hallucinations manifest as ``confusion''~\cite{farquhar2024detecting, tonolini-etal-2024-bayesian}. This state is typically characterized by high entropy or inconsistency across stochastic samples. Similarly, white-box methods often analyze static internal representations to detect deviations in the latent space~\cite{chen2024inside, wang2025revisitinghallucinationdetectioneffective}. However, these uncertainty-based paradigms face a critical failure mode known as the \textbf{Stubborn Hallucination}. We define Stubborn Hallucinations as factually incorrect predictions generated with high confidence and stability. In these pathological cases, the model has memorized an error or heuristic, resulting in a low-entropy output distribution that mimics the statistical signature of robust factual knowledge. Because the model is ``confidently wrong,'' methods relying on output consistency or probability are rendered ineffective as they fail to discriminate between a robust fact and an entrenched error.

In this work, we diverge from probabilistic heuristics and propose a geometric perspective on hallucination detection. Drawing on generalization theory~\cite{10.1162/neco.1997.9.1.1, keskar2017on}, we hypothesize that the distinguish-ability between facts and stubborn errors lies not in the output probability, but in the local curvature of the loss landscape. As illustrated in Figure~\ref{fig:intuition_pipeline} (left), we posit that generalized knowledge resides in flat minima. These are regions supported by redundant features learned from diverse contexts, where the loss remains robust to local parameter perturbations. Conversely, stubborn hallucinations correspond to sharp minima. These arise from the memorization of sparse or noisy patterns~\cite{feldman2020does}. While both regimes may yield low loss and high confidence at the exact minimum, their geometry differs fundamentally. Even a minor perturbation in a sharp basin induces a dramatic spike in the gradient magnitude, whereas a flat basin remains stable.

\begin{figure*}[]
    \centering
    \includegraphics[width=0.8\linewidth]{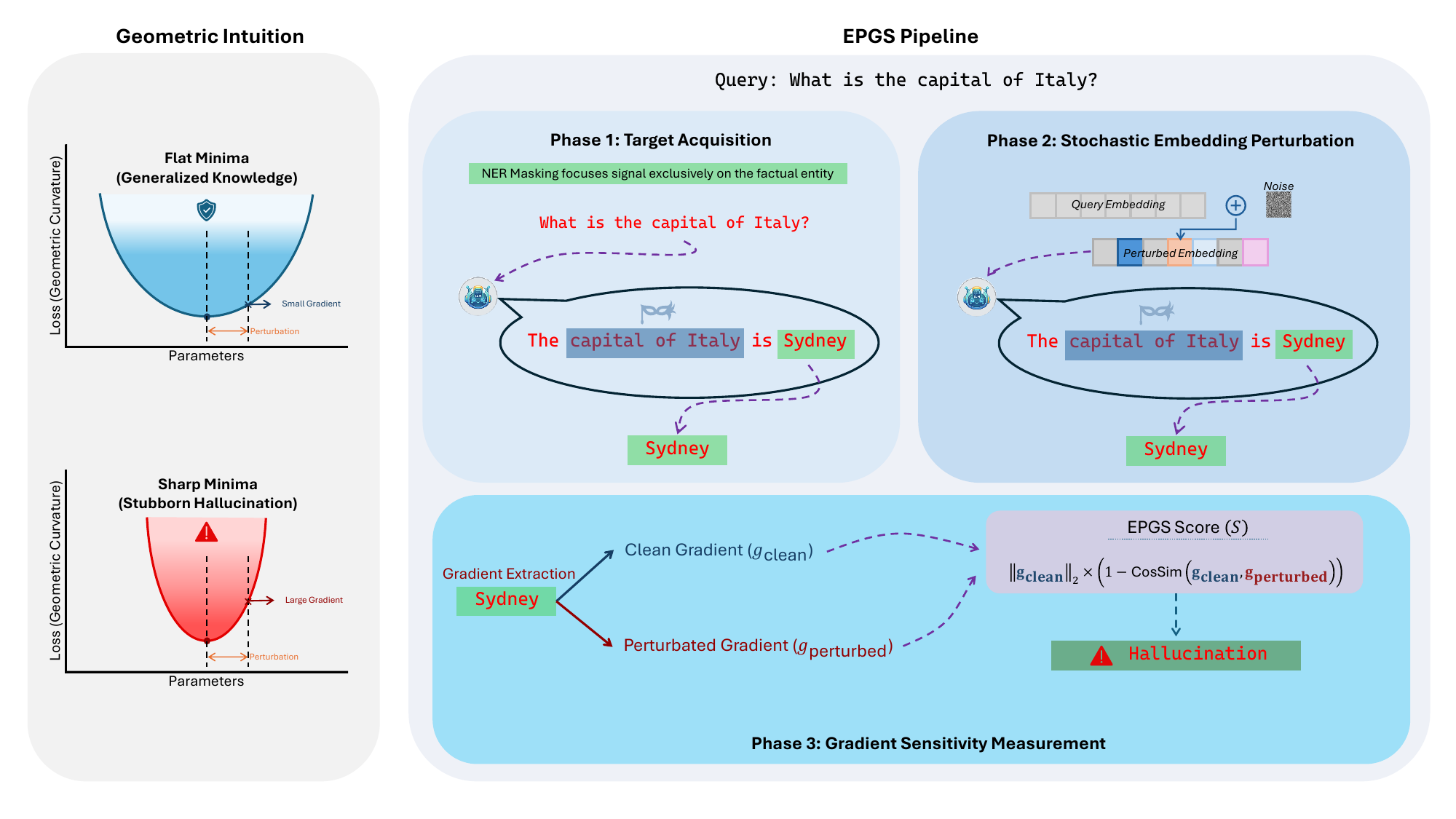}
    \caption{\textbf{Left:} Geometric intuition underlying our approach. Generalized knowledge resides in flat minima, where a parameter perturbation yields a small gradient. Stubborn hallucinations reside in sharp minima, where the same perturbation induces a large gradient spike. \textbf{Right:} The EPGS pipeline consists of three phases. Phase 1 (Target Acquisition) isolates the factual entity using NER masking. Phase 2 (Stochastic Embedding Perturbation) injects Gaussian noise into the query embedding. Phase 3 (Gradient Sensitivity Measurement) computes the EPGS score by comparing the clean and perturbed gradients to detect the geometric instability characteristic of a hallucination.}
    \label{fig:intuition_pipeline}
\end{figure*}

To operationalize this insight without the prohibitive cost of computing the Hessian, we introduce \textbf{Embedding-Perturbed Gradient Sensitivity (EPGS)}. By injecting Gaussian noise into the input embedding space, we actively probe the stability of the optimization landscape. We derive a theoretical connection showing that input sensitivity serves as a computationally efficient proxy for the Hessian spectrum, effectively measuring the ``sharpness'' of the model's belief. Our contributions are threefold:

\begin{itemize}
    \item \textbf{Geometric Definition of Hallucination:} We categorize hallucinations based on loss landscape geometry. We distinguish between Transient Hallucinations in unstable regions and Stubborn Hallucinations in sharp minima. This classification provides a theoretical framework explaining why confidence-based metrics fail.
    \item \textbf{EPGS Framework:} We propose a white-box detection method that utilizes gradient sensitivity under embedding perturbation to estimate local curvature. This approach effectively separates robust knowledge from brittle memorization.
    \item \textbf{SOTA Performance:} We demonstrate that EPGS significantly outperforms both entropy-based and representation-based baselines across multiple benchmarks.
\end{itemize}

\section{Related Work}
\label{sec:background}

\subsection{Types of Hallucination}

Following recent surveys~\cite{zhang-etal-2025-sirens, Ji_2023}, we categorize hallucinations based on the nature of contradiction. \textbf{Input-conflicting hallucinations} encompass generations that deviate significantly from the user's input query or provided source context. These errors are frequently observed in summarization tasks where the model generates details absent from the source document~\cite{Huang_2025}. A distinct category is \textbf{context-conflicting hallucinations}, which refer to self-contradictions within a single generated sequence. In these instances, the generated content remains consistent with the input but becomes logically inconsistent with previously generated tokens, a phenomenon often attributed to ``lost-in-the-middle'' attention deficits where the model fails to maintain long-range dependencies~\cite{liu-etal-2024-lost}. 

The most pernicious errors, however, are \textbf{fact-conflicting hallucinations}, where the model generates fluent, plausible, but factually incorrect statements that contradict established world knowledge. We further bifurcate this category based on the model’s predictive stability. \textbf{Transient hallucinations} arise when the model lacks knowledge about a specific entity or concept, resulting in a high-entropy output distribution where stochastic sampling yields diverse, inconsistent answers~\cite{tonolini-etal-2024-bayesian}. In contrast, we define \textbf{stubborn hallucination}s as errors where the model consistently converges to the same incorrect fact across multiple stochastic generations. These errors exhibit low entropy and high confidence, mimicking the statistical behavior of robust knowledge~\cite{feldman2020does}. Because these errors are stable, they render traditional consistency-based detection methods ineffective.

\subsection{Hallucination Detection}

Current approaches to hallucination detection can be broadly classified into three paradigms based on the information source they utilise. The most straightforward approach is direct elicitation, which involve prompting the model to verbalize its confidence or self-evaluate its generation. While intuitive, these methods suffer from severe miscalibration, as LLMs frequently exhibit high verbal confidence in their own ~\cite{kadavath2022languagemodelsmostlyknow, lin2022teaching}.

A more robust line of research focuses on output probability consistency (black-box methods). Metrics such as Length-Normalized Entropy (LNE)~\cite{malinin2021uncertainty} aggregate token probabilities to approximate sequence likelihood. To address the issue of semantic equivalence, advanced techniques like Semantic Entropy~\cite{farquhar2024detecting} sample multiple trajectories for a single query and measure the entropy over meanings rather than tokens. While effective for detecting transient hallucinations, these methods are inherently limited by their reliance on disagreement. They fail fundamentally when applied to stubborn hallucinations, where the model is confidently wrong and consistently generates the same incorrect meaning, resulting in a false signal of factual validity. To overcome the limitations of surface-level statistics, recent scholarship has turned to internal representation consistency (white-box methods). Approaches such as EigenScore~\cite{chen2024inside} and Effective Rank~\cite{wang2025revisitinghallucinationdetectioneffective} analyze the geometry of the hidden state space. These methods posit that factual recall utilizes a higher effective rank and richer feature usage than hallucinated content, which often suffers from representational collapse. However, these methods typically analyze static representations, or ``inference at rest'' which also fail fundamentally when applied to stubborn hallucinations.

\section{Geometric View of Hallucinations}
\label{sec:geo_view}

\subsection{Preliminaries and Notation}

Let $f_\theta: \mathcal{X} \to \mathcal{Y}$ denote an autoregressive LLM parameterized by $\theta \in \mathbb{R}^d$. Given an input query $x$, the model generates a response sequence $y$. We consider the standard cross-entropy loss $\mathcal{L}(\theta; x, y) = -\sum_{t} \log P_\theta(y_t \mid y_{<t}, x)$.

We operate in a \textbf{reference-free setting} where ground truth is unavailable during inference. Consequently, we treat the model's greedy-decoded response $\hat{y} = \arg\max_y P_\theta(y \mid x)$ as a pseudo-label. Our analysis focuses on the local behavior of the optimization landscape around the converged parameters $\theta^*$ with respect to this fixed prediction $\hat{y}$.

\subsection{The Geometry of Hallucination}

Standard uncertainty quantification methods typically assume that incorrect predictions manifest as high-entropy (flat) output distributions. While this holds for \textbf{transient hallucinations} which stem from weakly learned concepts and exhibit low confidence; it fails to account for \textbf{stubborn hallucinations}. These are factually incorrect predictions generated with high confidence. Because entropy-based detection relies on distributional flatness, it is inherently blind to these high-confidence errors.

To address this limitation, we reframe the problem from a probabilistic perspective to a geometric one. We propose that while stubborn hallucinations mimic the predictive confidence of true knowledge (zeroth-order similarity), they differ fundamentally in the curvature of their loss landscape (second-order divergence). First, we formalize the stability that characterizes stubbornness:

\begin{definition}[Stubborn Hallucination]
\label{def:stubborn}
A prediction $\hat{y}$ is a Stubborn Hallucination if it is factually incorrect ($\hat{y} \neq y_{true}$) yet exhibits $\delta$-stability in the output space against local perturbations. Formally, let $P_{\theta^*}(\cdot \mid x)$ denote the model output at a local minimum $\theta^*$. For a local embedding perturbation $\epsilon \sim \mathcal{U}(B_\rho)$ drawn uniformly from a ball of radius $\rho$, the expected Kullback-Leibler divergence satisfies:
\begin{equation}
    \mathbb{E}_{\epsilon}\left[D_{KL}\Big(P_{\theta^*}(\cdot \mid x) \parallel P_{\theta^*}(\cdot \mid x + \epsilon)\Big)\right] \leq \delta,
\end{equation}
for a sufficiently small threshold $\delta > 0$.
\end{definition}

Crucially, both robust factual knowledge and stubborn hallucinations satisfy this stability condition (low $\delta$). Therefore, zero-order metrics, such as output probability or entropy are insufficient discriminators. To distinguish between them, we hypothesized the second-order properties of the loss landscape as:

\begin{hypothesis}[Curvature Hypothesis] \label{hyp:curvature}
Drawing on generalization theory \citep{10.1162/neco.1997.9.1.1, keskar2017on}, we posit that the local geometry of the minimum $\theta^*$ differentiates the nature of the learned information:

\textbf{Generalized Knowledge (Facts) $\rightarrow$ Flat Minima:} Correct predictions supported by redundant features learned from diverse contexts reside in flat minima. Here, the Hessian $\mathbf{H} = \nabla^2_\theta \mathcal{L}$ has a small spectral norm (small $\lambda_{max}$), indicating robustness to parameter shifts.

\textbf{Stubborn Hallucinations $\rightarrow$ Sharp Minima:} Incorrect but stable predictions arise from overfitting to sparse or noisy patterns \citep{arpit2017closer, feldman2020does}. These solutions correspond to sharp minima characterized by a large spectral norm, creating narrow basins of attraction.

\textbf{Transient Hallucinations $\rightarrow$ Unstable Regions:} Low-confidence errors reside in regions with non-zero gradient norms $\|\nabla_\theta \mathcal{L}\| > 0$ or highly anisotropic curvature, representing artifacts of incomplete optimization rather than entrenched memorization.

\end{hypothesis}

This distinction is crucial. A Facts is robust because it is supported by a broad consensus of weights updated by diverge gradients. A Stubborn Hallucination is robust only in a narrow sense that it is a memorized singularity. It is ``stubborn'' because the basin is deep (high likelihood), but ``sharp'' because it lacks the support of neighbouring parameters. 

To empirically validate this geometric distinction, we explicitly computed the top Hessian eigenvalue ($\lambda_{\max}$) via power iteration. As detailed in Appendix~\ref{appx:hessian_analysis}, our analysis confirms a strong correlation ($r = 0.855$) between the EPGS score and true Hessian sharpness, with stubborn hallucinations residing in significantly sharper basins (average $\lambda_{\max}$ is $2.5\times$ larger) compared to robust facts.

\subsection{Input-Parameter Isomorphism}
\label{subsec:input_para_iso}

While Hypothesis~\ref{hyp:curvature} offers a geometric criterion for distinguishing stubborn hallucinations from generalized knowledge, explicitly computing the full Hessian spectrum for LLMs is computationally prohibitive. To circumvent this bottleneck, we introduce \textbf{Embedding-Perturbed Gradient Sensitivity (EPGS)}. This method exploits a local functional relationship between the input embedding space and the parameter space. While the mapping from input to output in a deep Transformer is highly non-linear, we posit that locally, a perturbation in the embedding space induces a specific trajectory of activation shifts that can be mapped to an equivalent perturbation in the parameter space of the final layer.

\begin{lemma}[First-order Gradient Sensitivity Approximation]
\label{lem:equal}
\
Let $f_\theta$ be a deep neural network decomposed into a body $\Phi$ and a final block $\mathcal{T}_{\text{last}}$. For a small input embedding perturbation $\delta$, there exists an induced parameter perturbation $\nu_\delta$ in the final block such that the shift in the loss landscape is locally invariant:
\begin{equation}
\label{lem:equivalance}
    \mathcal{L}(\theta^*, \mathbf{E} + \delta, \hat{y}) \approx \mathcal{L}(\theta^* + \nu_\delta, \mathbf{E}, \hat{y}).
\end{equation}
This equivalence holds up to first-order Taylor approximation around the current activation point and is not claimed to be exact beyond local neighborhoods.
\end{lemma}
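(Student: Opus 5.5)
We prove the statement by linearizing the body $\Phi$ and then absorbing the resulting activation shift into the weights of the final block. Write the network as $f_\theta(\mathbf{E}) = \mathcal{T}_{\text{last}}(\Phi(\mathbf{E}); W)$, where $W$ collects the parameters of the final block, and write $h = \Phi(\mathbf{E})$ for the activation entering that block.

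\textbf{Step 1: linearize the body.} Assume $\Phi$ is differentiable at $\mathbf{E}$, which holds for smooth activations such as GELU and for softmax attention. A first-order Taylor expansion gives
\[
\Phi(\mathbf{E}+\delta) = h + J_\Phi \delta + O(\|\delta\|^2),
\]
where $J_\Phi$ is the Jacobian of $\Phi$ at $\mathbf{E}$. Set $\Delta h := J_\Phi\delta$. Expanding the loss to first order in $\Delta h$ then gives
\[
\mathcal{L}(\theta^*, \mathbf{E}+\delta, \hat y) = \mathcal{L}(\theta^*, \mathbf{E}, \hat y) + \langle \nabla_h \mathcal{L}, \Delta h\rangle + O(\|\delta\|^2).
\]

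\textbf{Step 2: construct $\nu_\delta$ in the final block.} We want a weight perturbation $\nu_\delta$ that reproduces the same first-order loss change with $\mathbf{E}$ fixed, that is, one satisfying
\[
\langle \nabla_W \mathcal{L}, \nu_\delta\rangle = \langle \nabla_h \mathcal{L}, \Delta h\rangle .
\]
The natural choice uses the first affine map of $\mathcal{T}_{\text{last}}$, which acts as $W_1 h$; this could be the value/query projection or the first MLP matrix. Choose the rank-one update
\[
\nu_\delta = W_1 \Delta h\, h^\top / \|h\|^2 .
\]
Then $(W_1+\nu_\delta)h = W_1(h+\Delta h)$, so the block's pre-activation after this layer matches exactly to first order.

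\textbf{Step 3: handle the other inputs to $\mathcal{T}_{\text{last}}$.} Here $h$ also enters through other channels: the residual stream, LayerNorm, and the key and query projections. If we only need equality of the loss values, a scalar condition, we do not need to match every channel. We can take the minimum-norm solution of the scalar equation,
\[
\nu_\delta = \frac{\langle \nabla_h \mathcal{L}, \Delta h\rangle}{\|\nabla_W \mathcal{L}\|^2}\,\nabla_W \mathcal{L},
\]
which exists whenever $\nabla_W\mathcal{L}\neq 0$. In the degenerate case $\nabla_W \mathcal{L}=0$, apply the rank-one matching of Step 2 to each linear map that reads $h$. Matching every affine entry point of $h$ in this way makes all downstream quantities agree to first order, by the chain rule.

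\textbf{Step 4: collect error terms.} Both sides of \eqref{lem:equivalance} then equal
\[
\mathcal{L}(\theta^*,\mathbf{E},\hat y)+\langle\nabla_h\mathcal{L},J_\Phi\delta\rangle+O(\|\delta\|^2).
\]
The estimate $\|\nu_\delta\| = O(\|\delta\|)$ holds because $\|\Delta h\|\le\|J_\Phi\|\|\delta\|$, and this bound keeps the parameter-side remainder at second order as well.

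\textbf{Main obstacle.} The hard step is Step 3, because the residual path bypasses the final block's weights. A residual perturbation $\Delta h$ cannot be reproduced pointwise by weight changes, since $h+\Delta h$ passes through the identity. I would handle this in one of two ways. The first is the scalar-matching construction above, which is all the lemma's approximate equality requires. The second is to include the final LayerNorm gain and bias, or the unembedding, in $\mathcal{T}_{\text{last}}$: a bias shift there can absorb the residual term, at the cost of making $\nu_\delta$ depend on the current activation point. This dependence on the activation point is exactly why the lemma is stated only locally.
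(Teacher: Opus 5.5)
Your Step~2 is the paper's entire proof. The appendix takes one linear map $z=\mathbf W\mathbf h$ in the final block, sets $\Delta\mathbf W=\mathbf W(\Delta\mathbf h)\mathbf h^\top/\|\mathbf h\|_2^2$, checks $(\Delta\mathbf W)\mathbf h=\mathbf W\Delta\mathbf h$, and stops. It never addresses the residual, normalization or attention channels. You are right that this does not reproduce the block output, but your repair in Step~3 fails in the degenerate branch.

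If $\nabla_W\mathcal L=0$, every $\nu$ moves the loss only at second order: $\mathcal L(\theta^*+\nu,\mathbf E,\hat y)=\mathcal L(\theta^*,\mathbf E,\hat y)+O(\|\nu\|^2)$. With $\|\nu_\delta\|=O(\|\delta\|)$, the lemma can therefore hold there only if $\langle J_\Phi^\top\nabla_h\mathcal L,\delta\rangle=0$, and nothing forces that. The parameter side has no first-order term, but the identity skip you flag in your last paragraph can still carry one on the input side. For example, take the head outside $\theta_{\text{last}}$ and a block $h\mapsto h+\mathbf W_2\,\mathrm{relu}(\mathbf W_1h)$ with $\mathbf W_1h<0$ entrywise. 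Every block weight gradient vanishes, while $\nabla_h\mathcal L$ equals the generically nonzero gradient arriving through the skip.

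So ``matching every affine entry point'' does not make downstream quantities agree. Your fallback for the skip path, scalar matching along $\nabla_W\mathcal L$, is undefined exactly here. This is the case that matters, because the proof of Theorem~\ref{thm:hessian_bound} applies the lemma at a point with $\nabla_{\theta_{\text{last}}}\mathcal L(\theta^*)\approx\mathbf 0$. Near such a point the scalar choice only gives $\|\nu_\delta\|\le\|\nabla_h\mathcal L\|\,\|J_\Phi\|\,\|\delta\|/\|\nabla_W\mathcal L\|$, so Step~4's bound does not follow from $\|\Delta h\|\le\|J_\Phi\|\|\delta\|$ alone. In addition, a $\nu_\delta$ that matches a single loss value carries none of the structure the theorem uses when it differentiates the equivalence in $\nu$.

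The fix is your second option, made the main argument rather than a remark. It is legitimate, because the appendix defines $\mathcal T_{\text{last}}$ to include the subsequent norms and head. Run the Step~2 construction at the unembedding $U$, acting on the final normalized states $o_t$ at the masked positions, rather than at the first affine map of the block. Every channel through which $h$ reaches the output (attention, MLP, normalization and the skip) is then funneled into $\Delta o_t=J_t\,\Delta h$, where $J_t=\partial o_t/\partial h$. An update with $\Delta U\,o_t=U\,\Delta o_t$ reproduces the logits, hence the loss, up to $O(\|\delta\|^2)$. It needs no case split, and $\|\Delta U\|=O(\|\delta\|)$ with a constant independent of any gradient norm.

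With several masked entity tokens, one rank-one update is not enough. Take $\Delta U=U\,\Delta O\,O^{+}$ with $O=[o_t]_t$ and $O^{+}$ its pseudoinverse; this requires the $o_t$ to be linearly independent. The paper's formula, which treats $\mathbf h$ as a single vector, has the same defect. The LayerNorm-bias variant you mention is weaker: Llama and Mistral use RMSNorm, which has no bias, and one shared vector cannot absorb position-dependent shifts.
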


\begin{proof}
We utilize the Input-Output Jacobian of the network body to map embedding noise to hidden state noise. We then construct a rank-1 update to the final block's weights that produces an identical shift in the pre-activation of the output. See Appendix~\ref{appx:proof_lemma} for the complete derivation involving deep non-linear architectures.
\end{proof}

This equivalence implies that model stability with respect to input perturbations acts as a high-fidelity proxy for parameter stability. We now formalize the relationship between the gradient norm under input noise and the Hessian spectral properties.

\begin{theorem}[Gradient Sensitivity Bounds Hessian Curvature]
\label{thm:hessian_bound}
Assume $\theta^*$ is a local minimum where $\nabla_\theta \mathcal{L}(\theta^*; x, \hat{y}) = 0$. Under the perturbation equivalence established in Lemma~\ref{lem:equal}, the gradient norm with respect to the parameters at the perturbed input is bounded by the spectral radius of the Hessian $\mathbf{H}$:
\begin{equation}
    \|\nabla_\theta \mathcal{L}(\theta^*; x+\epsilon, \hat{y})\|_2 \lesssim \lambda_{\max}(\mathbf{H}) \cdot \|\nu_\epsilon\|_2.
\end{equation}
\end{theorem}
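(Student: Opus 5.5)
The plan is to move the input perturbation into parameter space with Lemma~\ref{lem:equal} and then Taylor-expand the gradient around the minimum $\theta^*$, so that the Hessian appears as the linear map from the induced parameter shift to the gradient.

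\textbf{Step 1: transfer the perturbation to parameters.} Lemma~\ref{lem:equal} gives $\mathcal{L}(\theta^*, \mathbf{E}+\epsilon, \hat{y}) \approx \mathcal{L}(\theta^*+\nu_\epsilon, \mathbf{E}, \hat{y})$ to first order in $\epsilon$. I need this at the level of gradients with respect to $\theta$, not only loss values. So I will apply the lemma on a neighbourhood of $\theta^*$ and not just at $\theta^*$ itself. Concretely, the rank-1 construction in Appendix~\ref{appx:proof_lemma} changes only the final-block weights, and it maps the hidden-state shift $J_\Phi \epsilon$ (with $J_\Phi$ the Jacobian of the body) to an additive weight shift. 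Differentiating both sides in $\theta$ should then give
\[
\nabla_\theta \mathcal{L}(\theta^*; x+\epsilon, \hat{y}) \approx \nabla_\theta \mathcal{L}(\theta^*+\nu_\epsilon; x, \hat{y}).
\]

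\textbf{Step 2: expand the gradient.} I expand the right-hand side around $\theta^*$:
\[
\nabla_\theta \mathcal{L}(\theta^*+\nu_\epsilon) = \nabla_\theta \mathcal{L}(\theta^*) + \mathbf{H}\nu_\epsilon + O(\|\nu_\epsilon\|^2).
\]
The stationarity assumption $\nabla_\theta \mathcal{L}(\theta^*; x, \hat{y}) = 0$ removes the zeroth-order term, leaving
\[
\nabla_\theta \mathcal{L}(\theta^*; x+\epsilon, \hat{y}) \approx \mathbf{H}\nu_\epsilon .
\]

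\textbf{Step 3: bound the norm.} Since $\mathbf{H}$ is symmetric, $\|\mathbf{H}\nu_\epsilon\|_2 \le \|\mathbf{H}\|_2 \|\nu_\epsilon\|_2 = \max_i |\lambda_i| \, \|\nu_\epsilon\|_2$. At a local minimum $\mathbf{H} \succeq 0$, so the spectral radius is $\lambda_{\max}(\mathbf{H})$. Absorbing the higher-order remainder into $\lesssim$ gives the stated inequality. The bound is sharp, with near-equality, when $\nu_\epsilon$ aligns with the top eigenvector. This is what justifies reading a large EPGS score as evidence of a large $\lambda_{\max}$.

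\textbf{Main obstacle.} The hard step is Step 1: upgrading the loss-level equivalence of Lemma~\ref{lem:equal} to a gradient-level one. Two functions that agree to first order at one point need not have matching gradients, and in any case $\nu_\epsilon$ depends on $\theta$ through $J_\Phi$. I would first handle only the gradient block of the final layer, where the rank-1 reparametrisation is exact in the pre-activation. I would then argue that the remaining blocks contribute terms of order $O(\|\epsilon\|\,\|\partial_\theta J_\Phi\|)$. These are second-order once I also use $\|\nu_\epsilon\| = O(\|\epsilon\|)$ and the $\nabla_\theta \mathcal{L} = 0$ condition. That is why the result is stated only with $\lesssim$.
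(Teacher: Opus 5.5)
Your proposal is correct and follows the paper's route. You transfer the input perturbation to parameter space via Lemma~\ref{lem:equal}, expand around $\theta^*$ so that stationarity leaves $\mathbf{H}\nu_\epsilon$, and bound this by $\|\mathbf{H}\|_2 = \lambda_{\max}(\mathbf{H})$. The paper does the same, except that it differentiates the quadratic expansion in $\nu$ and simply identifies the result with the backward-pass gradient, skipping the loss-to-gradient transfer you flag (your observation that the $\theta$-dependence of $\nu_\epsilon$ contributes only second-order terms fills that in); it then goes beyond the stated inequality by bounding $\|\nu\|_F \le \|\mathbf{W}\|_2 K_\Phi \|\delta\|_2 / \|\mathbf{h}\|_2$ via the rank-1 form and the Lipschitz assumption.
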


\begin{proof}
We perform a second-order Taylor expansion of the loss around $\theta^*$. Given that the first-order gradient vanishes at the local minimum, the leading non-zero term is determined by the Hessian $\mathbf{H}$. The gradient at the perturbed state can be approximated as $\nabla_\theta \mathcal{L}(\theta^* + \nu_\epsilon) \approx \nabla_\theta \mathcal{L}(\theta^*) + \mathbf{H}\nu_\epsilon = \mathbf{H}\nu_\epsilon$. Consequently, the magnitude of the gradient vector is bounded by the operator norm of the Hessian, which corresponds to its largest eigenvalue $\lambda_{\max}$. (See Appendix~\ref{appx:proof_theorem} for the full proof).
\end{proof}

Theorem~\ref{thm:hessian_bound} provides the rigorous justification for utilizing input sensitivity as a curvature metric. It establishes a direct link between the magnitude of the gradient under input noise and the sharpness of the minimum:

\textbf{If $\hat{y}$ is a Stubborn Hallucination (Sharp Minimum):} $\lambda_{\max}(\mathbf{H})$ is large. Even minor input noise $\epsilon$ translates to a perturbation $\nu_\epsilon$ that forces the model up the steep walls of the basin, resulting in a distinct gradient spike. \textbf{If $\hat{y}$ is a Transient Hallucination (Unstable Region):} The model resides in a non-convex or non-converged region (e.g., a saddle point) characterized by high local variance. Here, the gradient direction is highly volatile under perturbation. As we define in the subsequent formulation, the EPGS score explicitly accounts for this directional instability, ensuring that transient hallucinations are detected alongside stubborn ones. \textbf{If $\hat{y}$ is a Fact (Flat Minimum):} $\lambda_{\max}(\mathbf{H})$ is small. The loss basin is wide, ensuring that the gradient induced by perturbation remains negligible ($\approx 0$), indicating robust knowledge retention.

\section{Methodology}
\label{sec:methodology}

As illustrated in Figure~\ref{fig:intuition_pipeline} (Right), our proposed method \textbf{Embedding-Perturbed Gradient Sensitivity (EPGS)}, is structured into three distinct phases: Target Acquisition, Embedding Perturbation, and Sensitivity Measurement.

\subsection{Phase 1: Target Acquisition and Entity Masking}

To accurately measure the model's certainty regarding a specific fact, we must isolate the gradient signal corresponding to the core entity, filtering out noise from syntactic filler words.

\paragraph{Pseudo-Label Generation} We input the clean query $x$ (e.g., \textit{``Who directed Inception?''}) into the LLM and obtain the greedy decoded answer $\hat{y}$ (e.g., \textit{``Inception was directed by Christopher Nolan''}). Since we operate in a reference-free setting, this response $\hat{y}$ serves as the pseudo-label for subsequent analysis.
    
\paragraph{Entity Extraction} To focus the gradient analysis solely on factual content, we employ an external Named Entity Recognition (NER) pipeline (\texttt{BERT-base-NER}) to identify the primary entity within $\hat{y}$ (e.g., \textit{``Christopher Nolan''}). Let this entity sequence be denoted as $y_{entity}$.
    
\paragraph{Gradient Masking} We construct a binary target mask $\mathbf{M}$ that isolates the tokens corresponding to $y_{entity}$. When computing the cross-entropy loss, all non-entity tokens are assigned an ignore index. Consequently, the backpropagated gradients are derived exclusively from the model's ability to predict the key entity, effectively masking out structural variations such as \textit{``was directed by.''}

\subsection{Phase 2: Stochastic Embedding Perturbation}
\label{subsec:embedding_perturbation}

To rigorously approximate the local curvature described in Lemma~\ref{lem:equal}, we inject noise directly into the continuous embedding space rather than relying on discrete prompt engineering or paraphrase. Let $\mathbf{E} \in \mathbb{R}^{L \times d}$ represent the dense embedding matrix of the input sequence. We introduce a stochastic perturbation $\delta$ drawn from an isotropic Gaussian distribution:
\begin{equation}
    \mathbf{E}_{perturbed} = \mathbf{E} + \delta, \quad \text{where } \delta \sim \mathcal{N}(0, \sigma^2 \mathbf{I}).
\end{equation}
This continuous perturbation $\mathbf{E}_{perturbed}$ is passed forward through the transformer, allowing us to explore the immediate neighborhood of the loss basin with high granularity. This serves as a computationally efficient proxy for Hessian-based sharpness without requiring second-order optimization.

\subsection{Phase 3: Gradient Sensitivity Measurement}

In the final phase, we quantify the instability of the prediction. We compute the gradients with respect to the last transformer block parameters $\theta_{last}$ for both the clean input ($g_{clean}$) and the perturbed input ($g_{perturbed}$):
\begin{align}
    g_{clean} &= \nabla_{\theta_{last}} \mathcal{L}_{masked}(f_{\theta}(\mathbf{E}), \hat{y}) \\
    g_{perturbed} &= \nabla_{\theta_{last}} \mathcal{L}_{masked}(f_{\theta}(\mathbf{E}_{perturbed}), \hat{y})
\end{align}

\paragraph{EPGS Score}
While Theorem~\ref{thm:hessian_bound} utilizes the gradient \textit{norm} to bound the Hessian spectrum, our practical metric relies on directional divergence to ensure robustness against gradient scaling issues. The connection is geometric: Theorem~\ref{thm:hessian_bound} establishes that in sharp minima (stubborn hallucinations), the high spectral radius $\lambda_{\max}$ induces a large gradient displacement vector $\Delta g = g_{\text{perturbed}} - g_{\text{clean}}$. In high-dimensional parameter spaces, a large random displacement vector $\Delta g$ is statistically likely to be orthogonal to the original gradient $g_{\text{clean}}$, resulting in significant angular deviation. Consequently, we define the EPGS score by explicitly coupling the gradient's magnitude with its directional volatility:

\begin{equation}
    \label{eq:epgs_score}
    \mathcal{S} = \underbrace{\|g_{\text{clean}}\|_2}_{\substack{\text{Local Geometry} \\ \text{(Curvature Scale)}}} \cdot \underbrace{(1 - \text{CosSim}(g_{\text{clean}}, g_{\text{perturbed}}))}_{\substack{\text{Stochastic Instability} \\ \text{(Directional Divergence)}}}
\end{equation}

\noindent where $\text{CosSim}(u, v) = \frac{u \cdot v}{\max(\|u\|_2, \epsilon) \cdot \max(\|v\|_2, \epsilon)}$ with a stabilization constant $\epsilon = 1\mathrm{e}^{-8}$ to prevent division by zero.

This decomposition is critical for robustness. The \textbf{magnitude term} captures the local geometry (steepness) of the loss landscape, acting as a scaling factor that reflects the model's baseline ``struggle'' to maintain the prediction. The \textbf{directional term} captures the instability, measuring how easily the optimization path is disrupted by noise. This dual formulation allows EPGS to detect diverse hallucination types:

\begin{itemize}
    \item \textbf{Stubborn Hallucinations (Sharp Minima):} High $\|g_{\text{clean}}\|_2$ due to steepness + Low CosSim $\rightarrow$ \textbf{High $\mathcal{S}$}
    \item \textbf{Transient Hallucinations (Unstable Regions):} High $\|g_{\text{clean}}\|_2$ + Low CosSim $\rightarrow$ \textbf{Maximal $\mathcal{S}$}
    \item \textbf{Robust Facts (Flat Minima):} Low $\|g_{\text{clean}}\|_2$ + High CosSim $\approx 1 \rightarrow$ \textbf{Low $\mathcal{S} \approx 0$}
\end{itemize}

By combining magnitude and direction, EPGS provides a unified metric that is sensitive to both the sharpness of memorized errors and the instability of confused generation.

\section{Experiments}
\label{sec:experiments}
We validate our method on two distinct testing scenarios to ensure both broad applicability and specific robustness against high-confidence errors (stubborn hallucinations).

\subsection{Experiment Setup}

\paragraph{Dataset Selection.} We select a diverse suite of datasets to rigorously evaluate hallucination detection across the spectrum of factual recall and logical reasoning. To assess Open-Domain Knowledge and Factuality, we rely on three distinct benchmarks. First, we utilize \textbf{TriviaQA}~\cite{joshi2017triviaqa}, a large-scale reading comprehension dataset derived from trivia enthusiasts. Complementing this is \textbf{SQuAD}~\cite{rajpurkar-etal-2016-squad}, which focuses on reading comprehension based on Wikipedia articles to evaluate the model's ability to ground answers in provided context. We further include \textbf{Natural Questions (NQ)}~\cite{kwiatkowski-etal-2019-natural}, comprising real user queries from Google search. Beyond factuality, we evaluate Reasoning Robustness using \textbf{Simple Variations on Arithmetic Math Word Problem (SVAMP)}~\cite{patel-etal-2021-nlp}. Unlike standard math datasets, SVAMP applies structural variations to elementary word problems to probe for superficial heuristic matching. 

\paragraph{Model Selection.} We evaluate our methodology on three diverse open-weight LLMs. We utilize \textbf{Llama-2-7b}~\cite{touvron2023llama2openfoundation}, \textbf{Llama-3-8b}~\cite{grattafiori2024llama3herdmodels}, \textbf{Mistral-7b-v0.1}~\cite{jiang2023mistral7b}. 

\paragraph{Baselines.} We benchmark our proposed method against recent hallucination detection metrics, categorized into black-box and white-box approaches. For Black-box methods, which rely solely on output probabilities and generated text, we evaluate: (1) \textbf{Length-normalized Entropy (LN-Entropy)}~\cite{malinin2021uncertainty} (2) \textbf{Semantic Entropy (SE)} (3) \textbf{Discrete Semantic Entropy (DSE)}~\cite{farquhar2024detecting} and (4) \textbf{P(False)}~\cite{kadavath2022languagemodelsmostlyknow}. For White-box methods, which leverage internal model representations similar to our approach, we compare against: (5) \textbf{Eigenscore}~\cite{chen2024inside} and (6) \textbf{Effective Rank}~\cite{wang2025revisitinghallucinationdetectioneffective}.

\paragraph{Labelling Strategy.} To formulate hallucination detection as a binary classification task, we derive ground truth labels by comparing the model's anchor generation $\hat{y}$ against the reference answer $y_{\text{ref}}$. We utilize two complementary metrics to ensure robust evaluation: (1) \textbf{BERTScore}~\cite{Zhang*2020BERTScore:}, which measure semantic similarity; and (2) \textbf{SQuAD-F1}~\cite{rajpurkar-etal-2016-squad}, which measures lexical overlap.

\paragraph{Evaluation Metric} We report the Area Under the Receiver Operating Characteristic curve (AUROC)~\cite{BRADLEY19971145}. This threshold-independent metric evaluates how effectively the EPGS Score $\mathcal{S}$ separates hallucinations from correct answers. 

\paragraph{Implementation Details} All experiments were conducted on a single NVIDIA RTX 4090 GPU (24GB VRAM) running Python 3.11 and PyTorch 2.5. For the generation of pseudo-response, we employ a low temperature setting ($T=0.1$) to minimize stochasticity and focus on the model's most likely reasoning path. In the perturbation phase, we inject Gaussian noise with a magnitude of $\epsilon = 0.1$ into the embeddings. To compute the EPGS score, we extract the gradients with respect to the parameters of the entire final Transformer block.

\subsection{Result of General Hallucinations Detection}

In this experiment, we evaluate performance across the full validation sets of the datasets. This setting assesses the method's baseline capability to distinguish correct answers from incorrect ones in a standard inference pipeline, encompassing the full range of uncertainty profiles from confused, high-entropy outputs to plausible hallucinations. 

\begin{table*}[ht]
\caption{Comparison of \textbf{General Hallucination Detection} methods across Llama-2-7B, Llama-3-8B, and Mistral-7B-v0.1 using \texttt{BERTScore} for labeling. We compare Black Box methods---Length-normalized Entropy (LNE), $P(\text{False})$ (PF), Semantic Entropy (SE), and Discrete SE (DSE)---and White Box methods---Eigenscore (ES) and Effective Rank (ER)---against our proposed method, EPGS. All values reported are AUROC scores (higher is better). The best results are highlighted in \textbf{bold} and the second-best results are \underline{underlined}.}
\label{tab:gen_hallu_bert}
\centering
\begin{small} 
\begin{tabular}{llccccccc}
\toprule
\multirow{2}{*}{\textbf{Model}} & \multirow{2}{*}{\textbf{Dataset}} & \multicolumn{4}{c}{\textbf{Black Box}} & \multicolumn{3}{c}{\textbf{White Box}} \\
\cmidrule(lr){3-6} \cmidrule(lr){7-9}
 &  & LNE & PF & SE & DSE & ER & ES & \textbf{EPGS (Ours)} \\
\midrule
\multirow{4}{*}{Llama-2-7B} 
 & TriviaQA & 0.6079 & 0.6312 & 0.7080 & 0.7122 & \underline{0.7308} & 0.7224 & \textbf{0.7629} \\
 & SQuAD    & 0.6318 & 0.6620 & 0.4839 & 0.6654 & \underline{0.6822} & 0.6812 & \textbf{0.6924} \\
 & NQ       & 0.6721 & 0.5828 & 0.6869 & \underline{0.6897} & 0.6778 & 0.6782 & \textbf{0.6951} \\
 & SVAMP    & 0.6464 & 0.7085 & 0.7835 & 0.7801 & \underline{0.7860} & 0.7808 & \textbf{0.8947} \\
\midrule
\multirow{4}{*}{Llama-3-8B} 
 & TriviaQA & 0.5976 & 0.6755 & 0.7082 & 0.7092 & 0.7174 & \underline{0.7226} & \textbf{0.8127} \\
 & SQuAD    & 0.6232 & 0.5279 & 0.6682 & 0.6686 & 0.6740 & \underline{0.6775} & \textbf{0.7742} \\
 & NQ       & 0.6369 & 0.6642 & 0.6875 & 0.6870 & 0.6947 & \underline{0.6972} & \textbf{0.7705} \\
 & SVAMP    & 0.7612 & 0.6033 & 0.9004 & 0.9145 & 0.9361 & \underline{0.9371} & \textbf{0.9732} \\
\midrule
\multirow{4}{*}{Mistral-7B-v0.1} 
 & TriviaQA & 0.6214 & 0.7318 & 0.7701 & 0.7689 & 0.7730 & \underline{0.7760} & \textbf{0.8289} \\
 & SQuAD    & 0.6442 & 0.5459 & 0.6905 & \underline{0.7013} & 0.6883 & 0.6826 & \textbf{0.7668} \\
 & NQ       & 0.6474 & 0.6629 & 0.7214 & \underline{0.7256} & 0.7139 & 0.7128 & \textbf{0.7712} \\
 & SVAMP    & 0.7129 & 0.7684 & 0.8747 & 0.8773 & 0.9002 & \underline{0.9037} & \textbf{0.9337} \\
\bottomrule
\end{tabular}
\end{small}
\end{table*}

\paragraph{Main Results and Analysis.} 
Table~\ref{tab:gen_hallu_bert} details the hallucination detection performance across three backbone architectures and four datasets. Our proposed method EPGS consistently securing the highest AUROC scores across all 12 model-dataset configurations. The results support three primary conclusions regarding the geometric nature of hallucinations.

\paragraph{Active Curvature Probing vs. Static Representation (vs. White-Box Methods)} 
A critical distinction between EPGS and existing white-box methods lies in the transition from static analysis to dynamic probing. Baselines such as \textit{Eigenscore}~\cite{chen2024inside} and \textit{Effective Rank} \cite{wang2025revisitinghallucinationdetectioneffective} assess the model's uncertainty by analyzing the covariance or rank of the internal hidden states at rest. While these metrics capture representational collapse, they are blind to the local geometry of the loss landscape. In contrast, EPGS operationalizes Theorem~\ref{thm:hessian_bound}, using embedding perturbations to actively approximate the Hessian spectrum. The substantial performance gap observed, for instance, on TriviaQA with Llama-3-8B, where EPGS surpasses Eigenscore by over 9\%, validates the \textit{Curvature Hypothesis} (Hypothesis~\ref{hyp:curvature}). It suggests that stubborn hallucinations are not necessarily characterized by degenerate internal states (which static methods detect), but rather by their residence in sharp minima. EPGS effectively isolates these errors by detecting the spike in gradient magnitude ($\|\nabla_\theta \mathcal{L}\|$) induced by the steep walls of the error basin, a signal inaccessible to static representation analysis.

\paragraph{Geometric Sharpness vs. Predictive Confidence (vs. Black-Box Methods)} 
Standard uncertainty quantification methods, such as \textit{Length-Normalized Entropy}~\cite{malinin2021uncertainty} and \textit{Semantic Entropy}~\cite{farquhar2024detecting}, rely on the assumption that hallucinations manifest as low-confidence (high-entropy) generations. This assumption fails for Stubborn Hallucinations, where the model is confidently wrong. The failure of entropy-based methods is most evident in knowledge-intensive tasks; for example, on SQuAD (Llama-2-7B), Semantic Entropy achieves only 0.4839 AUROC, indicative of random guessing. Conversely, EPGS achieves 0.6924, confirming that while the output distribution is flat (confident), the underlying optimization landscape is sharp. By probing the second-order properties of the loss landscape rather than zeroth-order output probabilities, EPGS successfully exposes the brittleness of high-confidence memorization.

\paragraph{Sensitivity to Transient Instability in Reasoning} 
The method demonstrates exceptional efficacy on reasoning-heavy tasks, achieving near-perfect separation on SVAMP (e.g., 0.9732 AUROC on Llama-3). Unlike factual recall, where errors often stem from memorized singularities (sharp minima), mathematical reasoning errors frequently correspond to Transient Hallucinations, non-converged, unstable regions of the loss landscape. In these regions, the gradient is not only large but directionally volatile. The dual formulation of the EPGS score (Eq.~\ref{eq:epgs_score}) is particularly advantageous here. The directional term $(1-\text{CosSim})$ captures the stochastic instability of the optimization path, while the magnitude term captures the lack of convergence. The consistent improvement across architectures (from Llama-2 to Mistral-7B) indicates that this geometric distinctiveness between valid reasoning (flat/stable) and heuristic failure (unstable) is a fundamental property of LLMs that EPGS robustly exploits.

\subsection{Result of Stubborn Hallucinations Detection}
\label{sec:result_stubb_hallu}

We evaluate the method against \textbf{Stubborn Hallucinations} where ``confidently wrong'' errors pose a fundamental challenge to black-box consistency methods. To isolate these pathological cases, we construct a \textit{Stubbornness Subset} by generating $k=5$ stochastic responses for each query with temperature $T=1.0$. We filter the dataset to retain only those instances where the model exhibits high semantic consistency, effectively removing ``confused'' generations. This subset comprises both Generalized Knowledges (consistently correct) and Stubborn Hallucinations (consistently incorrect).

Crucially, because output entropy is minimized for both categories in this subset, they are indistinguishable to metrics relying on predictive stability. This scenario serves as a critical stress test, isolating the method's ability to leverage gradient curvature to discriminate between grounded knowledge and memorized errors. Table~\ref{tab:stubborn_results} presents the performance comparison.

\begin{table*}[t]
\caption{Performance comparison on the \textbf{Stubborn Hallucination Subset} across Llama-2-7B and Llama-3-8B using \texttt{BERTScore} for labelling. We evaluate Black Box methods (LNE, PF, SE, DSE) and White Box methods (ER, ES) against our proposed method (EPGS). All values are AUROC scores. The best results are highlighted in \textbf{bold} and the second-best results are \underline{underlined}.}
\label{tab:stubborn_results}
\centering
\begin{small} 
\begin{tabular}{llccccccc}
\toprule
\multirow{2}{*}{\textbf{Model}} & \multirow{2}{*}{\textbf{Dataset}} & \multicolumn{4}{c}{\textbf{Black Box}} & \multicolumn{3}{c}{\textbf{White Box}} \\
\cmidrule(lr){3-6} \cmidrule(lr){7-9}
 &  & LNE & PF & SE & DSE & ER & ES & \textbf{EPGS (Ours)} \\
\midrule
\multirow{3}{*}{Llama2-7B} 
 & TriviaQA & 0.6114 & 0.5372 & \underline{0.6873} & 0.6525 & 0.6803 & 0.6708 & \textbf{0.6976} \\
 & SQuAD    & \underline{0.6222} & 0.5796 & 0.5842 & 0.5814 & 0.5926 & 0.6003 & \textbf{0.7373} \\
 & NQ       & 0.6097 & \underline{0.6753} & 0.6116 & 0.6022 & 0.5819 & 0.5778 & \textbf{0.7023} \\
\midrule
\multirow{3}{*}{Llama3-8B} 
 & TriviaQA & 0.5818 & 0.6262 & 0.6631 & \underline{0.6945} & 0.6867 & 0.6692 & \textbf{0.7187} \\
 & SQuAD    & \underline{0.6637} & 0.4978 & 0.5918 & 0.5803 & 0.6094 & 0.5857 & \textbf{0.7816} \\
 & NQ       & 0.5360 & 0.6547 & 0.7149 & \underline{0.7606} & 0.7565 & 0.7565 & \textbf{0.7656} \\
\bottomrule
\end{tabular}
\end{small}
\end{table*}

\paragraph{Collapse of Uncertainty Metrics}
The results in Table~\ref{tab:stubborn_results} confirm that standard uncertainty quantification breaks down in the face of stubborn hallucinations. On the Llama-2-7B SQuAD benchmark, SE which relies on disagreement among generations, degrades to an AUROC of 0.5842, performing only marginally better than random guessing. Similarly, confidence-based methods like PF fail to generalize, dropping to 0.4978 on Llama-3-8B SQuAD. This empirical failure validates our premise: when a model has memorized an incorrect fact, it collapses into a low-entropy state that mimics true knowledge, rendering zero-order (probability) and first-order (entropy) metrics ineffective.

\paragraph{Curvature as the Discriminator}
In contrast, EPGS maintains robust detection performance, achieving the highest AUROC across all benchmarks. On the challenging Llama-2 SQuAD split, EPGS achieves 0.7373, outperforming Semantic Entropy by over 15\% and the strongest white-box baseline (Eigenscore) by nearly 14\%. This performance gap highlights the orthogonality of predictive confidence and geometric curvature. While stubborn hallucinations share the high confidence of facts (flat probability distributions), EPGS reveals that they reside in topologically distinct regions of the loss landscape as stated in Hypothesis~\ref{hyp:curvature}. The superior performance of EPGS confirms that input-gradient sensitivity serves as a high-fidelity proxy for this Hessian sharpness, successfully recovering the signal that static representation methods and entropy methods miss.

\subsection{Ablation Studies}

\paragraph{Gradient Source Location}
Our ablation study in Table~\ref{tab:ablation_grad_location} identifies the optimal source for gradient extraction, demonstrating that the Last Transformer Block consistently maximizes detection performance across all architectures. We observe a clear hierarchy: while the Middle Transformer Block captures partial semantic features, it lacks the final consolidation of the model's belief state, resulting in performance drops. Conversely, the LLM Head and Final Layer Norm exhibit severe degradation, often approaching random performance on knowledge-intensive tasks. This failure at the output stage suggests that the projection layer's gradients are masked by probability saturation in high-confidence hallucinations; in contrast, the dense parameters of the final transformer block retain the latent geometric ``sharpness'' necessary to distinguish robust knowledge from brittle memorization.

\begin{table*}[t]
\caption{Ablation study on the gradient extraction location. We compare extracting gradients from the Middle Transformer Block (Mid Trans.), the Last Transformer Block (Last Trans.), the Final Layer Norm (Layer Norm), and the LLM Head. The best performance (AUROC) for each dataset is highlighted in \textbf{bold}.}
\label{tab:ablation_grad_location}
\centering
\begin{small}
\setlength{\tabcolsep}{3.5pt} % Adjust column spacing to fit page width
\begin{tabular}{lcccccccccccc}
\toprule
\multirow{2}{*}{\textbf{Location}} & \multicolumn{4}{c}{\textbf{Llama-2-7B}} & \multicolumn{4}{c}{\textbf{Llama-3-8B}} & \multicolumn{4}{c}{\textbf{Mistral-7B}} \\
\cmidrule(lr){2-5} \cmidrule(lr){6-9} \cmidrule(lr){10-13}
 & Trivia & SQuAD & NQ & SVAMP & Trivia & SQuAD & NQ & SVAMP & Trivia & SQuAD & NQ & SVAMP \\
\midrule
Mid Trans.  & 0.7546 & \textbf{0.6965} & 0.6484 & 0.8031 & 0.7980 & 0.6866 & 0.6822 & 0.9318 & \textbf{0.8303} & 0.6791 & 0.6834 & 0.8170 \\
Last Trans. & \textbf{0.7629} & 0.6924 & \textbf{0.6951} & \textbf{0.8947} & \textbf{0.8127} & \textbf{0.7742} & \textbf{0.7705} & \textbf{0.9732} & 0.8289 & \textbf{0.7668} & \textbf{0.7712} & \textbf{0.9337} \\
Layer Norm  & 0.6030 & 0.5069 & 0.4604 & 0.5372 & 0.7360 & 0.5928 & 0.6005 & 0.7628 & 0.7636 & 0.5182 & 0.6784 & 0.6916 \\
LLM Head    & 0.5848 & 0.5099 & 0.4928 & 0.6378 & 0.7450 & 0.5474 & 0.5964 & 0.7577 & 0.7614 & 0.4594 & 0.6350 & 0.6763 \\
\bottomrule
\end{tabular}
\end{small}
\end{table*}

\paragraph{Perturbation Magnitude Sensitivity}

Table~\ref{tab:ablation_noise} presents the ablation study on the Gaussian noise magnitude $\epsilon$, demonstrating that EPGS exhibits remarkable stability across orders of magnitude ($\epsilon \in [0.001, 1]$). The performance variance remains minimal (less than 2\% fluctuation), confirming that the geometric distinction between robust facts and stubborn hallucinations is a fundamental topological property rather than an artifact of specific hyperparameter tuning. We observe a subtle domain-specific trend: smaller perturbations ($\epsilon=0.001$) marginally favor TriviaQA, likely providing the high-resolution probing necessary to detect the extremely narrow basins of rote memorization, whereas larger perturbations ($\epsilon=1$) are well-tolerated in SQuAD and NQ. This resilience implies that the ``flat'' basins of generalized knowledge are sufficiently wide to accommodate significant embedding shifts without inducing gradient spikes, validating the method's practical reliability without the need for extensive per-task sweeping.

\begin{table}[t]
\caption{Ablation study on the magnitude of Gaussian noise ($\epsilon$) added during gradient sensitivity analysis. We compare performance across different noise levels $\epsilon \in \{0.001, 0.01, 0.1, 1\}$ for Llama-2-7b and Llama-3-8b. The best result for each column is highlighted in \textbf{bold}.}
\label{tab:ablation_noise}
\centering
\begin{small}
\setlength{\tabcolsep}{2.5pt} % Squeezed for single column
\begin{tabular}{lcccccc}
\toprule
\multirow{2}{*}{\textbf{$\epsilon$}} & \multicolumn{3}{c}{\textbf{Llama-2}} & \multicolumn{3}{c}{\textbf{Llama-3}} \\
\cmidrule(lr){2-4} \cmidrule(lr){5-7}
 & Trivia & SQuAD & NQ & Trivia & SQuAD & NQ \\
\midrule
$0.001$ & \textbf{0.7748} & 0.7847 & 0.7568 & \textbf{0.8130} & \textbf{0.7960} & 0.7818 \\
$0.01$  & 0.7640 & 0.7381 & 0.7461 & 0.7972 & 0.7932 & 0.7830 \\
$0.1$   & 0.7629 & 0.6924 & 0.6951 & 0.8127 & 0.7742 & 0.7705 \\
$1$     & 0.7561 & \textbf{0.7979} & \textbf{0.7792} & 0.8067 & 0.7945 & \textbf{0.7985} \\
\bottomrule
\end{tabular}
\end{small}
\end{table}

\section{Limitations}
\label{sec:limitations}

While EPGS provides a principled geometric signal for hallucination detection, several limitations exist.

\paragraph{White-box access}
EPGS relies on direct access to model gradients ($\nabla_\theta \mathcal{L}$), restricting its application to open-weight architectures.

\paragraph{Computational latency}
Although more efficient than Hessian computation, EPGS requires backward passes, which are inherently more expensive than standard inference. This introduces a latency overhead (Table~\ref{tab:ablation_perturbation}) that may constrain deployment in real-time or resource-limited environments compared to purely sampling-based methods.

\paragraph{Binary classification}
The current framework produces a scalar score for binary detection. While our theory distinguishes between transient (unstable) and stubborn (sharp) errors, the metric does not yet explicitly output these fine-grained categories as diagnostic labels.

\paragraph{Sensitivity to rare facts}
Long-tail factual knowledge often relies on rote memorization, resulting in sharp loss basins similar to stubborn errors~\cite{feldman2020does}. This may lead EPGS to flag rare facts as hallucinations (false positives). However, we argue this conservative behavior is desirable in high-stakes settings, as such ``brittle'' knowledge lacks the geometric robustness of generalized concepts.

\paragraph{Entity-Centric}
Our current framework relies on an entity extraction mechanism (Phase 1) to isolate the gradient signal and prevent syntactic dilution. Consequently, EPGS is inherently entity-centric, making it highly effective for factoid-based question answering like TriviaQA, SQuAD. However, this reliance poses a limitation when evaluating complex, abstract reasoning tasks, open-ended creative writing, or code generation, where a hallucination may span an entire logical concept rather than a discrete named entity. Extending geometric curvature probing to these entity-less, free-form generations remains an open challenge for future work.

\section{Conclusion}
\label{sec:conclusion}

In this work, we address the critical challenge of Stubborn Hallucinations, where LLMs generate factually incorrect content with high confidence and stability. We propose \textbf{Embedding-Perturbed Gradient Sensitivity (EPGS)}, a geometric framework that distinguishes between generalized knowledge (residing in flat minima) and brittle memorization (residing in sharp minima) by probing the local curvature of the loss landscape via continuous embedding perturbations. Our theoretical analysis and extensive experiments across multiple open-weight models demonstrate that EPGS serves as a computationally efficient proxy for Hessian-based sharpness, achieving state-of-the-art performance in detecting high-confidence errors where both predictive uncertainty and static representation analyses fail. The code is publicly available at \url{https://github.com/potato0o/Stubborn_Hallucinations}.

\section*{Impact Statement}

This work aims to advance the field of Machine Learning by addressing a critical safety concern in Large Language Models: ``Stubborn Hallucinations,'' where models generate factually incorrect information with high confidence and stability. By providing a geometric framework (EPGS) to distinguish between robust knowledge and brittle memorization, this research has significant implications for the deployment of LLMs in high-stakes domains such as healthcare, legal analysis, and finance. In these fields, the inability of traditional uncertainty metrics to flag confident errors can lead to dangerous misinformation and decision-making failures.

Our method improves the interpretability and reliability of open-weight models, offering a mechanism to filter out entrenched errors that mimic factual knowledge. However, we acknowledge that our approach relies on white-box access to model gradients, which currently limits its application to open-source ecosystems and excludes closed-source API-based models. Additionally, while more efficient than full Hessian computation, the method introduces a computational overhead compared to standard inference, which entails increased energy consumption that practitioners must weigh against the necessity for high-fidelity verification. Ultimately, this work contributes to the broader goal of aligning model confidence with factual correctness, a prerequisite for trustworthy AI systems.

\section*{Acknowledgement}
We thank the anonymous reviewers for their constructive feedback and helpful comments. This work was supported by the Xi'an Jiaotong-Liverpool University Postgraduate Research Support (PGRS) grant FOSA2312001.

% In the unusual situation where you want a paper to appear in the
% references without citing it in the main text, use \nocite
\nocite{langley00}

\bibliography{example_paper}
\bibliographystyle{icml2026}

%%%%%%%%%%%%%%%%%%%%%%%%%%%%%%%%%%%%%%%%%%%%%%%%%%%%%%%%%%%%%%%%%%%%%%%%%%%%%%%
%%%%%%%%%%%%%%%%%%%%%%%%%%%%%%%%%%%%%%%%%%%%%%%%%%%%%%%%%%%%%%%%%%%%%%%%%%%%%%%
% APPENDIX
%%%%%%%%%%%%%%%%%%%%%%%%%%%%%%%%%%%%%%%%%%%%%%%%%%%%%%%%%%%%%%%%%%%%%%%%%%%%%%%
%%%%%%%%%%%%%%%%%%%%%%%%%%%%%%%%%%%%%%%%%%%%%%%%%%%%%%%%%%%%%%%%%%%%%%%%%%%%%%%
\newpage
\appendix
\onecolumn

\section{Theoretical Derivations}
\label{appendix:theory}

In this section, we provide the rigorous mathematical derivation connecting Embedding-Perturbed Gradient Sensitivity (EPGS) to the Hessian spectrum of the loss landscape.

\subsection{Problem Formulation for Deep Architectures}
Let the Large Language Model (LLM) be defined as a function $f_\theta: \mathbb{R}^{L \times d} \to \mathbb{R}^{L \times V}$, mapping a sequence of input embeddings $\mathbf{E}$ to logits. We decompose the network into two parts:
\begin{enumerate}
    \item \textbf{ The Network Body ($\Phi$):} Represents the composition of the first $N-1$ transformer layers. It maps input embeddings $\mathbf{E}$ to the final hidden state $\mathbf{H}_{N-1}$.
    \begin{equation}
        \Phi: \mathbb{R}^{L \times d} \to \mathbb{R}^{L \times d}
    \end{equation}
    \item \textbf{The Final Block ($\mathcal{T}_{\text{last}}$):} Represents the $N$-th transformer layer (and subsequent norms/heads) parameterized by $\theta_{\text{last}}$. It maps the hidden state to the output.
    \begin{equation}
        \mathcal{T}_{\text{last}}: \mathbb{R}^{L \times d} \times \Theta_{\text{last}} \to \mathbb{R}^{L \times V}
    \end{equation} 
\end{enumerate}

We analyze the sensitivity of the loss $\mathcal{L}$ with respect to the parameters $\theta_{\text{last}}$ under a stochastic perturbation of the input embeddings.

\subsection{Propagation of Perturbation (The Jacobian Link)}
Consider an isotropic Gaussian perturbation $\delta \sim \mathcal{N}(0, \sigma^2 \mathbf{I})$ added to the input embedding $\mathbf{E}$. Let $\mathbf{h} = \Phi(\mathbf{E})$ denote the hidden state input to the final block.
Applying a first-order Taylor expansion to $\Phi$, the perturbation in the final hidden state, denoted as $\Delta \mathbf{h}$, is given by:
\begin{equation}
    \Delta \mathbf{h} = \Phi(\mathbf{E} + \delta) - \Phi(\mathbf{E}) \approx \mathbf{J}_{\Phi}(\mathbf{E}) \delta
\end{equation}
where $\mathbf{J}_{\Phi}(\mathbf{E})$ is the Jacobian of the network body with respect to the input.

\begin{assumption} [Lipschitz Continuity] 
\label{assump:lipschitz}
We assume the network body $\Phi$ is locally Lipschitz continuous with constant $K_{\Phi}$. This is a standard assumption in deep learning stability analysis, ensuring that bounded input noise does not result in exploding activation divergence for stable models:
\begin{equation}
    \| \Delta \mathbf{h} \|_2 \le K_{\Phi} \| \delta \|_2
\end{equation}
\end{assumption}

\subsection{Proof of Lemma~\ref{lem:equivalance}}
\label{appx:proof_lemma}
\textit{Lemma~\ref{lem:equivalance} states that an input perturbation is locally equivalent to a parameter perturbation.}

\begin{proof}
We focus on the linear projection components of the final transformer block (e.g., the Feed-Forward Network or Attention output projection), denoted by weights $\mathbf{W} \in \theta_{\text{last}}$.
The operation on the hidden state $\mathbf{h}$ is linear: $z = \mathbf{W}\mathbf{h}$.

Under the propagated noise $\Delta \mathbf{h}$, the perturbed activation is:
\begin{equation}
    z_{\text{perturbed}} = \mathbf{W}(\mathbf{h} + \Delta \mathbf{h}) = \mathbf{W}\mathbf{h} + \mathbf{W}(\Delta \mathbf{h})
\end{equation}
We seek an induced parameter perturbation $\nu = \Delta \mathbf{W}$ such that the output on the \textit{original} hidden state matches the perturbed output:
\begin{equation}
    z' = (\mathbf{W} + \Delta \mathbf{W})\mathbf{h} = \mathbf{W}\mathbf{h} + (\Delta \mathbf{W})\mathbf{h}
\end{equation}
Equating the deviation terms, we require $(\Delta \mathbf{W})\mathbf{h} = \mathbf{W}(\Delta \mathbf{h})$.
Since this system is under-determined, we construct a rank-1 update utilizing the outer product:
\begin{equation}
    \Delta \mathbf{W} = \frac{\mathbf{W} (\Delta \mathbf{h}) \mathbf{h}^\top}{\|\mathbf{h}\|_2^2}
\end{equation}
\textit{Verification:}
\begin{equation} \label{eq:noise_eq}
    (\Delta \mathbf{W})\mathbf{h} = \frac{\mathbf{W} (\Delta \mathbf{h}) (\mathbf{h}^\top \mathbf{h})}{\|\mathbf{h}\|_2^2} = \mathbf{W}(\Delta \mathbf{h})
\end{equation} 
Thus, there exists a parameter perturbation $\nu = \text{vec}(\Delta \mathbf{W})$ that functionally replicates the effect of the input noise $\delta$ at the layer output. 

\end{proof}

\subsection{Proof of Theorem~\ref{thm:hessian_bound}}
\label{appx:proof_theorem}
\textit{Theorem~\ref{thm:hessian_bound} asserts that the gradient norm under input noise is bounded by the Hessian spectral radius.}

\begin{proof}

Let $\theta^*$ be a local minimum such that $\nabla_{\theta_{\text{last}}} \mathcal{L}(\theta^*) \approx \mathbf{0}$. We examine the gradient of the loss with respect to the parameters of the last block, evaluated at the perturbed input $\mathbf{E} + \delta$.
By Lemma~\ref{lem:equal}, evaluating $\mathcal{L}$ at input $\mathbf{E}+\delta$ is locally equivalent to evaluating at parameters $\theta^* + \nu$.

We perform a second-order Taylor expansion of the loss around $\theta^*$:
\begin{equation}
    \mathcal{L}(\theta^* + \nu) \approx \mathcal{L}(\theta^*) + \nu^\top \nabla \mathcal{L}(\theta^*) + \frac{1}{2} \nu^\top \mathbf{H} \nu
\end{equation}
Taking the gradient with respect to $\nu$ (which corresponds to the gradient w.r.t parameters observed during the backward pass):
\begin{equation}
    \nabla_{\theta_{\text{last}}} \mathcal{L}(\mathbf{E}+\delta) \approx \nabla \mathcal{L}(\theta^*) + \mathbf{H}\nu \approx \mathbf{H}\nu
\end{equation}
The magnitude of this gradient vector, which constitutes the magnitude term of our EPGS score ($g_{\text{perturbed}}$), is bounded by the operator norm of the Hessian:
\begin{equation}
    \| g_{\text{perturbed}} \|_2 \approx \| \mathbf{H} \nu \|_2 \le \| \mathbf{H} \|_2 \| \nu \|_2 = \lambda_{\max}(\mathbf{H}) \| \nu \|_2
\end{equation}
Substituting $\nu = \Delta \mathbf{W}$ from Eq.~\ref{eq:noise_eq} and bounding terms:
\begin{equation}
    \| \nu \|_F = \| \Delta \mathbf{W} \|_F \le \frac{\| \mathbf{W} \|_2 \| \Delta \mathbf{h} \|_2 \| \mathbf{h} \|_2}{\| \mathbf{h} \|_2^2} = \frac{\| \mathbf{W} \|_2 \| \Delta \mathbf{h} \|_2}{\| \mathbf{h} \|_2}
\end{equation}
Finally, incorporating the Lipschitz bound from Assumption~\ref{assump:lipschitz} ($ \| \Delta \mathbf{h} \|_2 \le K_{\Phi} \| \delta \|_2 $):
\begin{equation}
    \| g_{\text{perturbed}} \|_2 \le \underbrace{\lambda_{\max}(\mathbf{H})}_{\text{Sharpness}} \cdot \underbrace{\left( \frac{\| \mathbf{W} \|_2 K_{\Phi}}{\| \mathbf{h} \|_2} \right)}_{\text{Model Context}} \cdot \underbrace{\| \delta \|_2}_{\text{Noise}}
\end{equation}
\textbf{Interpretation:} For a fixed noise magnitude $\| \delta \|_2$ and a converged model state (where latent representations $\mathbf{h}$ and weights $\mathbf{W}$ are stable), the gradient magnitude is dominated by $\lambda_{\max}(\mathbf{H})$.
\begin{enumerate}
    \item \textbf{Stubborn Hallucinations:} Reside in sharp minima $\to$ Large $\lambda_{\max}$ $\to$ Large Gradient Spike.
    \item \textbf{Robust Facts:} Reside in flat minima $\to$ Small $\lambda_{\max}$ $\to$ Negligible Gradient Spike.
\end{enumerate}
This confirms that input sensitivity is a theoretically grounded proxy for the Hessian spectrum. 
    
\end{proof}

\section{Algorithm of EPGS}

We summarize the proposed EPGS framework in Algorithm~\ref{alg:epgs}. The process begins by isolating the target entity within the generated pseudo-label to focus the gradient signal on factual content. We then inject isotropic Gaussian noise into the input embeddings to actively probe the local curvature of the loss landscape. Finally, the detection score is computed by measuring the magnitude increase and directional divergence of the gradients in the last transformer block, effectively separating robust knowledge from brittle memorization.

\begin{algorithm}[t]
    \caption{Embedding-Perturbed Gradient Sensitivity (EPGS)}
    \label{alg:epgs}
    
    % This command turns off the "0" line numbers from the review template
    %\nolinenumbers 
    
    \begin{algorithmic}[1]
        \REQUIRE Pre-trained LLM $f_\theta$, Input Query $x$, Perturbation Scale $\sigma$, Stabilization Constant $\epsilon$
        \ENSURE Hallucination Score $S$

        \STATE \textbf{Phase 1: Target Acquisition}
        \STATE $\hat{y} \leftarrow \text{GreedyDecode}(f_\theta, x)$ \COMMENT{Get anchor label}
        \STATE $y_{entity} \leftarrow \text{NER}(\hat{y})$ \COMMENT{Extract key entity}
        \STATE $M \leftarrow \text{CreateMask}(\hat{y}, y_{entity})$ \COMMENT{Binary mask for entity tokens}

        \STATE \textbf{Phase 2: Stochastic Embedding Perturbation}
        \STATE $E \leftarrow \text{GetInputEmbeddings}(f_\theta, x)$
        \STATE $\delta \sim \mathcal{N}(0, \sigma^2 I)$ \COMMENT{Sample Gaussian noise }
        \STATE $E_{perturbed} \leftarrow E + \delta$

        \STATE \textbf{Phase 3: Gradient Sensitivity Measurement}
        \STATE $\mathcal{L}_{clean} \leftarrow \mathcal{L}_{masked}(f_\theta(E), \hat{y}, M)$
        \STATE $g_{clean} \leftarrow \nabla_{\theta_{last}} \mathcal{L}_{clean}$

        \STATE $\mathcal{L}_{perturbed} \leftarrow \mathcal{L}_{masked}(f_\theta(E_{perturbed}), \hat{y}, M)$
        \STATE $g_{perturbed} \leftarrow \nabla_{\theta_{last}} \mathcal{L}_{perturbed}$

        \STATE \textbf{Calculate EPGS Score}
        \STATE $\text{CosSim} \leftarrow \frac{g_{clean} \cdot g_{perturbed}}{\max(\|g_{clean}\|_2, \epsilon) \max(\|g_{perturbed}\|_2, \epsilon)}$
        \STATE $S \leftarrow \|g_{clean}\|_2 \cdot (1 - \text{CosSim})$ \COMMENT{Combines magnitude \& direction }

        % Manual return command to fix the error
        \STATE \textbf{return} $S$
    \end{algorithmic}
\end{algorithm}

\section{Additional Experiment}

\subsection{Empirical Validation of Hessian Sharpness}
\label{appx:hessian_analysis}

A core theoretical premise of our work (Hypothesis~\ref{hyp:curvature}) is that stubborn hallucinations reside in sharp minima, characterized by a large Hessian spectral norm ($\lambda_{\max}$), whereas robust facts reside in flat minima. To provide a direct empirical link between theory and practice, we explicitly computed the top Hessian eigenvalue via power iteration.

We conducted this analysis on 50 robust facts and 50 stubborn hallucinations sampled from the TriviaQA dataset. The sample collection follows the methodology described in Section~\ref{sec:result_stubb_hallu}, ensuring that all selected instances maintain an output prediction probability exceeding $80\%$. Using the Llama-3-8B model, we observed a strong positive correlation between the proposed EPGS score and true Hessian sharpness. As illustrated in Figure~\ref{fig:hessian_analysis}, this relationship yields a Pearson correlation coefficient of $r = 0.855$ ($p < 10^{-28}$) and a Spearman rank correlation of $\rho = 0.862$. These results empirically validate that our computationally efficient, gradient-based metric serves as a high-fidelity proxy for the actual curvature of the loss landscape.

Crucially, this analysis substantiates our geometric definition of Stubborn Hallucinations. We demonstrate that entrenched factual errors consistently occupy sharper basins; specifically, the average $\lambda_{\max}$ for stubborn hallucinations is $2.5\times$ larger than that of generalized, robust facts. This empirical evidence firmly establishes that ``confidently wrong'' errors are topologically distinct from true knowledge, and that this geometric sharpness provides a reliable signal for detection.

\begin{figure}
    \centering
    \includegraphics[width=0.6\linewidth]{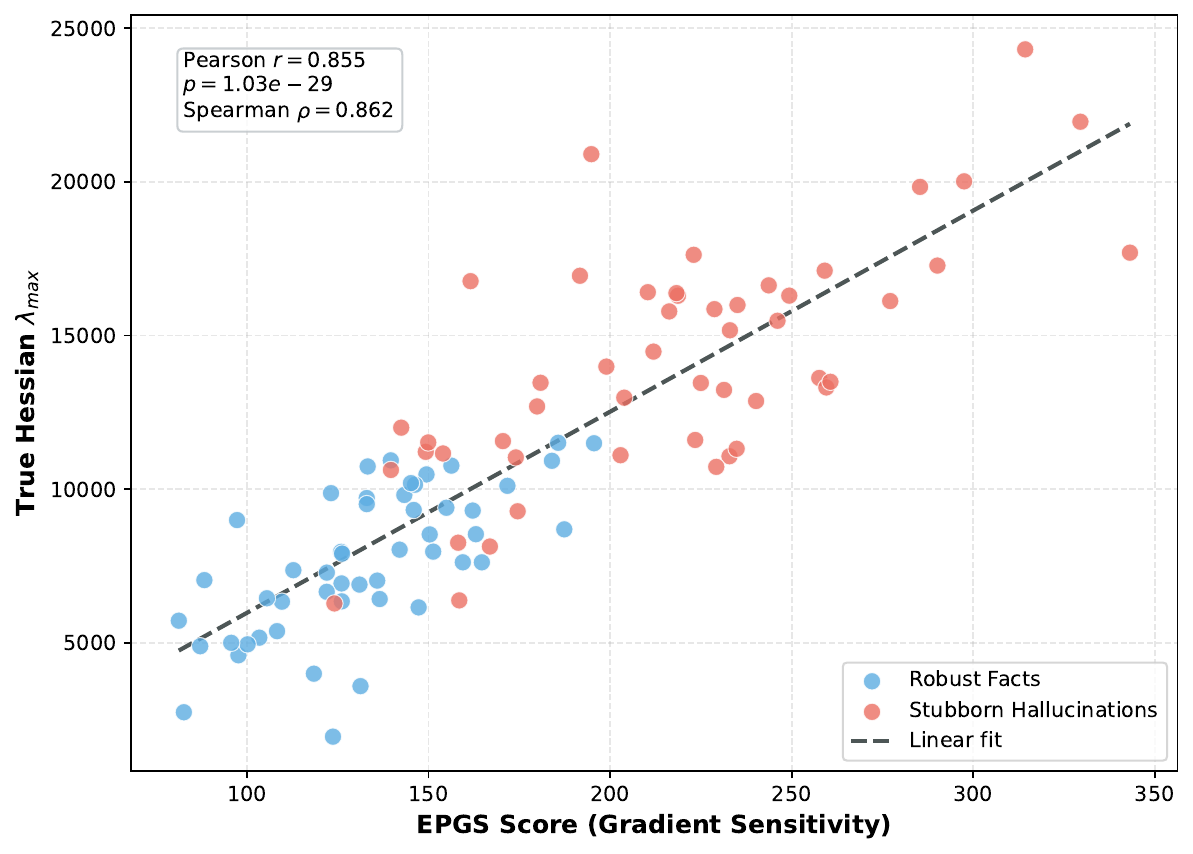}
    \caption{Correlation between the proposed EPGS score and the true top Hessian eigenvalue ($\lambda_{\max}$) computed via 15 power iteration on Llama-3-8B (``Stubborn'' TriviaQA).}
    \label{fig:hessian_analysis}
\end{figure}

\subsection{General Hallucinations Detection with SQuAD-F1}

To verify the robustness of our evaluation framework, we replicate the detection benchmark from Table~\ref{tab:gen_hallu_bert} utilizing SQuAD-F1 as the ground truth labelling criterion, as detailed in Table~\ref{tab:results_squad_f1}. Distinct from BERTScore, which quantifies soft semantic alignment, SQuAD-F1 enforces a rigorous penalty for deviations in lexical overlap between the generated response and the reference.

\paragraph{Performance Analysis} While EPGS remains highly competitive, we observe a contraction in the performance margin compared to the BERTScore evaluation presented in Table~\ref{tab:gen_hallu_bert}. Notably, on the SQuAD dataset utilizing Llama-2-7B, EPGS (0.7109) is marginally outperformed by Effective Rank (0.7578) and Semantic Entropy (0.7511). We attribute this variation primarily to the lexical rigidity of the SQuAD-F1 metric. Since Large Language Models are stochastic and capable of diverse expression, a generated response may possess full semantic equivalence to the ground truth, for instance, through valid paraphrasing, while lacking exact lexical matching. Consequently, this phenomenon does not indicate a degradation in the method's intrinsic detection capability, but rather reflects a misalignment between rigid n-gram based metrics and the open-ended generative nature of modern LLMs.

\begin{table*}[t]
\caption{Comparison of General Hallucination Detection methods across Llama-2-7B, Llama-3-8B, and Mistral-7B-v0.1 using \texttt{SQuAD-F1} for labeling. We compare Black Box methods (LNE, PF, SE, DSE) and White Box methods (ER, ES) against our proposed method (EPGS). All values are AUROC scores (higher is better). Best results are in \textbf{bold}, second-best are \underline{underlined}.}
\label{tab:results_squad_f1}
\centering
\begin{small}
\begin{tabular}{llccccccc}
\toprule
\multirow{2}{*}{\textbf{Model}} & \multirow{2}{*}{\textbf{Dataset}} & \multicolumn{4}{c}{\textbf{Black Box}} & \multicolumn{3}{c}{\textbf{White Box}} \\
\cmidrule(lr){3-6} \cmidrule(lr){7-9}
 &  & LNE & PF & SE & DSE & ER & ES & \textbf{EPGS (Ours)} \\
\midrule
\multirow{4}{*}{Llama-2-7B} 
 & TriviaQA & 0.7004 & 0.6722 & 0.7790 & 0.7803 & \underline{0.7909} & 0.7816 & \textbf{0.7997} \\
 & SQuAD    & 0.6912 & 0.5564 & 0.7511 & 0.7537 & \textbf{0.7578} & \underline{0.7543} & 0.7109 \\
 & NQ       & 0.7182 & 0.6430 & 0.6860 & \underline{0.7540} & 0.7421 & 0.7455 & \textbf{0.7571} \\
 & SVAMP    & 0.6641 & 0.7237 & 0.8517 & \textbf{0.8570} & \underline{0.8530} & 0.8517 & 0.8186 \\
\midrule
\multirow{4}{*}{Llama-3-8B} 
 & TriviaQA & 0.6455 & 0.7306 & 0.7215 & 0.7205 & \underline{0.7357} & 0.7328 & \textbf{0.7762} \\
 & SQuAD    & 0.6795 & 0.5870 & \textbf{0.7488} & \underline{0.7428} & 0.7409 & 0.7342 & 0.7166 \\
 & NQ       & 0.6959 & 0.6635 & 0.7318 & 0.7297 & \underline{0.7368} & 0.7353 & \textbf{0.7394} \\
 & SVAMP    & 0.6802 & 0.7563 & \textbf{0.8789} & \underline{0.8753} & 0.8540 & 0.8677 & 0.8640 \\
\midrule
\multirow{4}{*}{Mistral-7B} 
 & TriviaQA & 0.6465 & 0.7306 & 0.7725 & 0.7689 & \underline{0.7865} & 0.7756 & \textbf{0.8176} \\
 & SQuAD    & 0.6740 & 0.6416 & \underline{0.7146} & \textbf{0.7187} & 0.7077 & 0.7100 & 0.7015 \\
 & NQ       & 0.7070 & 0.7421 & \textbf{0.7910} & \underline{0.7908} & 0.7608 & 0.7643 & 0.7205 \\
 & SVAMP    & 0.6510 & 0.7228 & 0.8749 & \underline{0.8801} & 0.8623 & 0.8602 & \textbf{0.8822} \\
\bottomrule
\end{tabular}
\end{small}
\end{table*}

\subsection{Extended Evaluations on Additional Architectures and Benchmarks}

To further validate the universality of our geometric framework, we extend our evaluation to include additional modern LLM architectures and specialized hallucination benchmarks. Table~\ref{tab:additional_models} presents the detection performance on the Qwen-2.5-1.5B and 14B~\cite{qwen2025qwen25technicalreport} and Gemma-3-4B~\cite{gemmateam2025gemma3technicalreport} model families across the TriviaQA, SQuAD, and Natural Questions datasets. The results confirm that the topological distinction between flat facts and sharp hallucinations is not unique to the Llama or Mistral families, but is a fundamental property of transformer-based language models. EPGS consistently maintains state-of-the-art performance across these architectures, demonstrating particular robustness on the larger Qwen-2.5-14B model where it significantly outperforms both entropy-based and static representation baselines.

\begin{table*}[t]
\caption{Extended performance comparison of hallucination detection methods across Qwen-2.5-1.5B, Qwen-2.5-14B, and Gemma-3-4B. Best results are in \textbf{bold}, second-best are \underline{underlined}.}
\label{tab:additional_models}
\centering
\begin{small}
\begin{tabular}{llccccccc}
\toprule
\multirow{2}{*}{\textbf{Model}} & \multirow{2}{*}{\textbf{Dataset}} & \multicolumn{4}{c}{\textbf{Black Box}} & \multicolumn{3}{c}{\textbf{White Box}} \\
\cmidrule(lr){3-6} \cmidrule(lr){7-9}
 &  & LNE & PF & SE & DSE & ES & ER & \textbf{OUR} \\
\midrule
\multirow{3}{*}{Qwen-2.5-1.5B} 
 & TriviaQA & 0.747 & 0.795 & 0.601 & 0.597 & \textbf{0.827} & \textbf{0.827} & \underline{0.817} \\
 & SQuAD    & 0.562 & 0.527 & 0.562 & 0.561 & 0.612 & \underline{0.614} & \textbf{0.711} \\
 & NQ       & \textbf{0.730} & 0.648 & 0.584 & 0.571 & 0.696 & 0.695 & \underline{0.729} \\
\midrule
\multirow{3}{*}{Qwen-2.5-14B} 
 & TriviaQA & 0.591 & 0.760 & 0.590 & 0.565 & 0.794 & \underline{0.795} & \textbf{0.817} \\
 & SQuAD    & 0.535 & 0.525 & 0.528 & 0.540 & \underline{0.707} & 0.704 & \textbf{0.763} \\
 & NQ       & 0.480 & 0.683 & 0.651 & 0.645 & \underline{0.779} & 0.771 & \textbf{0.817} \\
\midrule
\multirow{3}{*}{Gemma-3-4B} 
 & TriviaQA & - & - & - & 0.577 & \underline{0.661} & 0.642 & \textbf{0.758} \\
 & SQuAD    & - & - & - & 0.629 & 0.643 & \underline{0.645} & \textbf{0.676} \\
 & NQ       & - & - & - & 0.632 & \underline{0.649} & \textbf{0.654} & 0.642 \\
\bottomrule
\end{tabular}
\end{small}
\end{table*}

Furthermore, we evaluate our method on two challenging hallucination-specific datasets: TruthfulQA~\cite{lin-etal-2022-truthfulqa}, which probes the model's susceptibility to human falsehoods and misconceptions, and HaluEval~\cite{li-etal-2023-halueval}, a comprehensive hallucination evaluation benchmark. As detailed in Table~\ref{tab:additional_datasets}, EPGS exhibits superior generalizability. Notably, on TruthfulQA, where models frequently output highly confident but factually incorrect mimicked falsehoods, static white-box methods like EigenScore and Effective Rank, suffer a severe performance collapse (AUROC $\approx 0.34 - 0.39$). In contrast, EPGS maintains a robust detection signal (AUROC $\approx 0.65 - 0.69$), successfully exposing the brittle, memorized nature of these falsehoods through gradient sensitivity. These extended evaluations underscore the reliability of geometric curvature as a universal discriminator for stubborn errors across diverse generative contexts.

\begin{table*}[t]
\caption{Comparison of methods across Llama-2-7B, Llama-3-8B, and Mistral-7B on extended datasets (TruthfulQA and HaluEval). Best results are in \textbf{bold}, second-best are \underline{underlined}.}
\label{tab:additional_datasets}
\centering
\begin{small}
\begin{tabular}{llccccccc}
\toprule
\multirow{2}{*}{\textbf{Model}} & \multirow{2}{*}{\textbf{Dataset}} & \multicolumn{4}{c}{\textbf{Black Box}} & \multicolumn{3}{c}{\textbf{White Box}} \\
\cmidrule(lr){3-6} \cmidrule(lr){7-9}
 &  & LNE & PF & SE & DSE & ES & ER & \textbf{OUR} \\
\midrule
\multirow{2}{*}{Llama-2-7B} 
 & TruthfulQA & \underline{0.601} & 0.566 & 0.416 & 0.427 & 0.386 & 0.394 & \textbf{0.670} \\
 & HaluEval   & 0.628 & 0.530 & 0.526 & 0.535 & \underline{0.656} & \underline{0.656} & \textbf{0.738} \\
\midrule
\multirow{2}{*}{Llama-3-8B} 
 & TruthfulQA & 0.523 & \underline{0.689} & 0.549 & 0.537 & 0.375 & 0.393 & \textbf{0.699} \\
 & HaluEval   & 0.585 & 0.615 & 0.528 & 0.518 & \underline{0.653} & 0.651 & \textbf{0.770} \\
\midrule
\multirow{2}{*}{Mistral-7B} 
 & TruthfulQA & 0.580 & \underline{0.647} & 0.419 & 0.443 & 0.353 & 0.342 & \textbf{0.658} \\
 & HaluEval   & 0.628 & 0.592 & 0.532 & 0.529 & 0.682 & \underline{0.683} & \textbf{0.755} \\
\bottomrule
\end{tabular}
\end{small}
\end{table*}

\subsection{Ablation Study: Type of Perturbation}

Our approach is grounded in the \textbf{Input-Parameter Isomorphism} (Lemma~\ref{lem:equal}), which posits that a perturbation in the input embedding space is locally equivalent to a perturbation in the model parameters. Historically, the direct application of Gaussian noise to LLM embeddings has been viewed with skepticism, with prior research suggesting that continuous perturbations might destroy semantic structure compared to discrete tokens~\cite{Zhu2020FreeLB:}. Consequently, the field has gravitated toward Discrete Semantic Prompting (e.g., paraphrasing) or \textit{Structural Perturbations} (e.g., Monte Carlo Dropout) to induce uncertainty. In this work, we challenge this convention, hypothesizing that for gradient sensitivity analysis, continuous embedding perturbation is not only sufficient but serves as a high-fidelity proxy for parameter-space curvature.

The empirical results in Table~\ref{tab:ablation_perturbation} provide strong validation for this hypothesis and, by extension, confirm the correctness of Lemma~\ref{lem:equal}. We compare EPGS against a comprehensive suite of perturbation strategies, including discrete prompts (Neutral, Active, Reasoning), Paraphrasing, and Inference-Time Dropout ($p=0.3$). Crucially, Dropout represents a direct perturbation of the model's internal operators (parameter space). If Lemma~\ref{lem:equal} holds, our input-based EPGS should approximate the signal derived from Dropout. The results confirm this with remarkable consistency: EPGS matches or exceeds the performance of Dropout across 5 out of 6 experimental configurations. For instance, on Llama-3-8B (TriviaQA), EPGS achieves an AUROC of \textbf{0.8127}, outperforming Dropout (0.8028). Furthermore, EPGS significantly outperforms standard Paraphrasing (e.g., 0.7629 vs. 0.6985 on Llama-2-7B TriviaQA), suggesting that Gaussian noise provides a purer signal of local curvature than the uncontrolled semantic variance introduced by paraphrasing.

This performance equivalence serves as proof of the Lemma~\ref{lem:equal} (Input-Parameter Isomorphism): probing the loss landscape via the input embedding (EPGS) effectively recovers the same sensitivity signal as probing the weights directly (Dropout). Beyond theoretical validation, this finding has significant practical implications. While Dropout requires architectural intrusion such as forcing modifications to the forward pass that break standard inference optimizations (e.g., vLLM, fused kernels), on the other side, EPGS is non-intrusive. It operates solely on the input embeddings and final gradients, treating the model backbone as a fixed operator. By demonstrating that we can detect stubborn hallucinations with the same fidelity as Dropout but without touching the model internals, EPGS offers a superior trade-off between geometric rigor and deployment feasibility.

\begin{table*}[t]
\caption{Ablation study comparing different perturbation strategies across Llama-2-7B and Llama-3-8B. We compare Prompt Perturbation methods (Empty, Neutral, Active, Reasoning, Noise), Paraphrasing, and Dropout against our proposed Embedding Perturbation (EPGS). All values are AUROC scores. The best performance in each column is highlighted in \textbf{bold}.}
\label{tab:ablation_perturbation}
\centering
\begin{small}
\begin{tabular}{ll p{4.5cm} cccccc}
\toprule
\multirow{2}{*}{\textbf{Type}} & \multirow{2}{*}{\textbf{Sub-type}} & \multirow{2}{*}{\textbf{Prompt Content}} & \multicolumn{3}{c}{\textbf{Llama-2-7B}} & \multicolumn{3}{c}{\textbf{Llama-3-8B}} \\
\cmidrule(lr){4-6} \cmidrule(lr){7-9}
 &  &  & Trivia & SQuAD & NQ & Trivia & SQuAD & NQ \\
\midrule
\multirow{9}{*}{\shortstack{Prompt\\Perturb}} 
 & Empty & Space (Empty Prompt) & 0.7365 & 0.6977 & 0.7026 & \textbf{0.8147} & 0.7601 & 0.7585 \\
 \cmidrule{2-9}
 & \multirow{2}{*}{Neutral} & ``Please answer this:'' & 0.7629 & 0.6924 & 0.6951 & 0.8127 & 0.7742 & \textbf{0.7705} \\
 &  & ``Simply state the answer:'' & 0.7620 & 0.7183 & 0.6947 & 0.8126 & 0.7645 & 0.7702 \\
 \cmidrule{2-9}
 & Active & ``You are a strict, factual encyclopedia.'' & \textbf{0.7742} & \textbf{0.7507} & 0.6947 & 0.8053 & \textbf{0.7847} & 0.7702 \\
 \cmidrule{2-9}
 & Reasoning & ``Let's think step by step. Analyze the question first, then state the answer.'' & 0.7703 & \textbf{0.7507} & 0.6947 & 0.8122 & \textbf{0.7847} & 0.7702 \\
 \cmidrule{2-9}
 & \multirow{2}{*}{Noise} & ``I just drank a coffee.'' & 0.7603 & 0.7085 & \textbf{0.7479} & 0.8093 & 0.7603 & 0.7550 \\
 &  & ``What is the capital of France?'' & 0.7550 & 0.7085 & \textbf{0.7479} & 0.7978 & 0.7603 & 0.7550 \\
\midrule
\multicolumn{2}{l}{Paraphrase} & \textit{(N/A)} & 0.6985 & 0.6464 & 0.6963 & 0.7622 & 0.6828 & 0.7300 \\
\midrule
\multicolumn{2}{l}{Dropout ($p=0.3$)} & \textit{(N/A)} & 0.7608 & 0.6923 & 0.6863 & 0.8028 & 0.7832 & 0.7680 \\
\midrule
\multicolumn{2}{l}{\textbf{Embedding Perturb (EPGS)}} & \textit{(N/A)} & 0.7629 & 0.6924 & 0.6951 & 0.8127 & 0.7742 & \textbf{0.7705} \\
\bottomrule
\end{tabular}
\end{small}
\end{table*}

\subsection{Ablation Study: Importance of Named Entity Recognition (NER)}
\label{sec:ablation_ner}

To validate the necessity and flexibility of Phase 1 (Target Acquisition), we evaluate the performance impact of our entity masking strategy. We compare our default \texttt{BERT}-based NER pipeline (``With NER'') against two alternative configurations: an ``LLM (Llama-3-8B)'' approach where the LLM is prompted to identify the key entities, and a ``Without NER'' baseline where the gradient sensitivity score $\mathcal{S}$ is computed over the entire generated sequence $\hat{y}$, including syntactic fillers and stop words.

The results, presented in Table~\ref{tab:ablation_ner}, demonstrate that isolating the factual entity is crucial for detection accuracy. We observe a consistent and significant performance degradation when the mask is removed entirely, with the AUROC dropping by as much as 8.2\% (Llama-2-7B on NQ). Our theoretical framework explains this through the mechanism of \textit{Gradient Signal Dilution}. While stubborn hallucinations are characterized by sharp local curvature (producing large gradient spikes), syntactic tokens (e.g., ``The'', ``is'', ``was'') typically correspond to well-learned, highly probable structural patterns residing in flat, stable regions of the loss landscape. When the gradient norm is averaged over the entire sequence, these negligible gradients from stable tokens dilute the high-magnitude signal originating from the brittle, hallucinated entity, effectively lowering the signal-to-noise ratio of the detection metric.

This dilution effect explains the variance in performance drops across datasets. On TriviaQA, where responses are often concise, entity-centric phrases (low verbosity), the difference between masking and not masking is marginal. Conversely, on SQuAD and Natural Questions (NQ), where the model generates complete sentences containing a higher ratio of filler words to entities, the removal of the mask causes a significant collapse in performance.

Crucially, the strong performance of the ``LLM (Llama3-8B)'' configuration confirms that our method's success is not an artifact of the specific \texttt{BERT} NER model, but rather a fundamental property of the underlying geometric signal. LLM-based masking remains highly robust, even outperforming the standard NER pipeline on Llama-2-7B for SQuAD and NQ. This establishes that the geometric signal is intrinsic to the factual content itself; as long as a masking mechanism successfully isolates the semantic focal point of the generation to prevent syntactic dilution, EPGS can accurately detect the geometric instability characteristic of stubborn hallucinations.

\begin{table}[t]
\caption{Ablation study on the importance of Named Entity Recognition (NER). The best results are highlighted in \textbf{bold}.}
\label{tab:ablation_ner}
\centering
\begin{small}
\setlength{\tabcolsep}{3.5pt} % Adjust column spacing to fit
\begin{tabular}{lcccccc}
\toprule
\multirow{2}{*}{\textbf{Configuration}} & \multicolumn{3}{c}{\textbf{Llama-2-7B}} & \multicolumn{3}{c}{\textbf{Llama-3-8B}} \\
\cmidrule(lr){2-4} \cmidrule(lr){5-7}
 & Trivia & SQuAD & NQ & Trivia & SQuAD & NQ \\
\midrule
With NER        & \textbf{0.7629} & 0.6924 & 0.6951 & \textbf{0.8127} & \textbf{0.7742} & \textbf{0.7705} \\
LLM (Llama3-8B) & 0.7322 & \textbf{0.7691} & \textbf{0.7516} & 0.7890 & 0.7375 & 0.7385 \\
Without NER     & 0.7570 & 0.6334 & 0.6127 & 0.8068 & 0.7580 & 0.7387 \\
\bottomrule
\end{tabular}
\end{small}
\end{table}

\subsection{Ablation Study: Sensitivity to Decoding Temperature}

In our primary evaluation setup, we utilized a low-temperature configuration ($T=0.1$) to approximate greedy decoding. This decision was designed to anchor our analysis on the model's most likely reasoning path, which strongly aligns with the definition of Stubborn Hallucinations (errors characterized by consistency and high confidence). However, verifying the robustness of a detection metric under stochastic decoding conditions is crucial for real-world deployment.

To evaluate this, we conducted an ablation study using the Llama-3-8B model across a broad spectrum of temperatures $T \in [0.1, 1.0]$. The results, presented in Figure~\ref{fig:ablation_temperature}, demonstrate that EPGS remains remarkably robust across all evaluated datasets. The detection performance (AUROC) fluctuates only marginally as the temperature increases, with higher temperatures occasionally yielding slight performance improvements (e.g., reaching 0.850 on TriviaQA at $T=1.0$).

This stability confirms that EPGS successfully captures the underlying geometric sharpness of the local loss basin, regardless of whether that specific basin was reached via deterministic greedy decoding or stochastic sampling. Notably, on reasoning-heavy tasks like SVAMP, EPGS maintains near-perfect separation (AUROC $> 0.95$) across the entire temperature spectrum. These results empirically validate that the geometric curvature probed by EPGS provides a robust, decoding-independent signal for hallucination detection.

\begin{figure}
    \centering
    \includegraphics[width=0.5\linewidth]{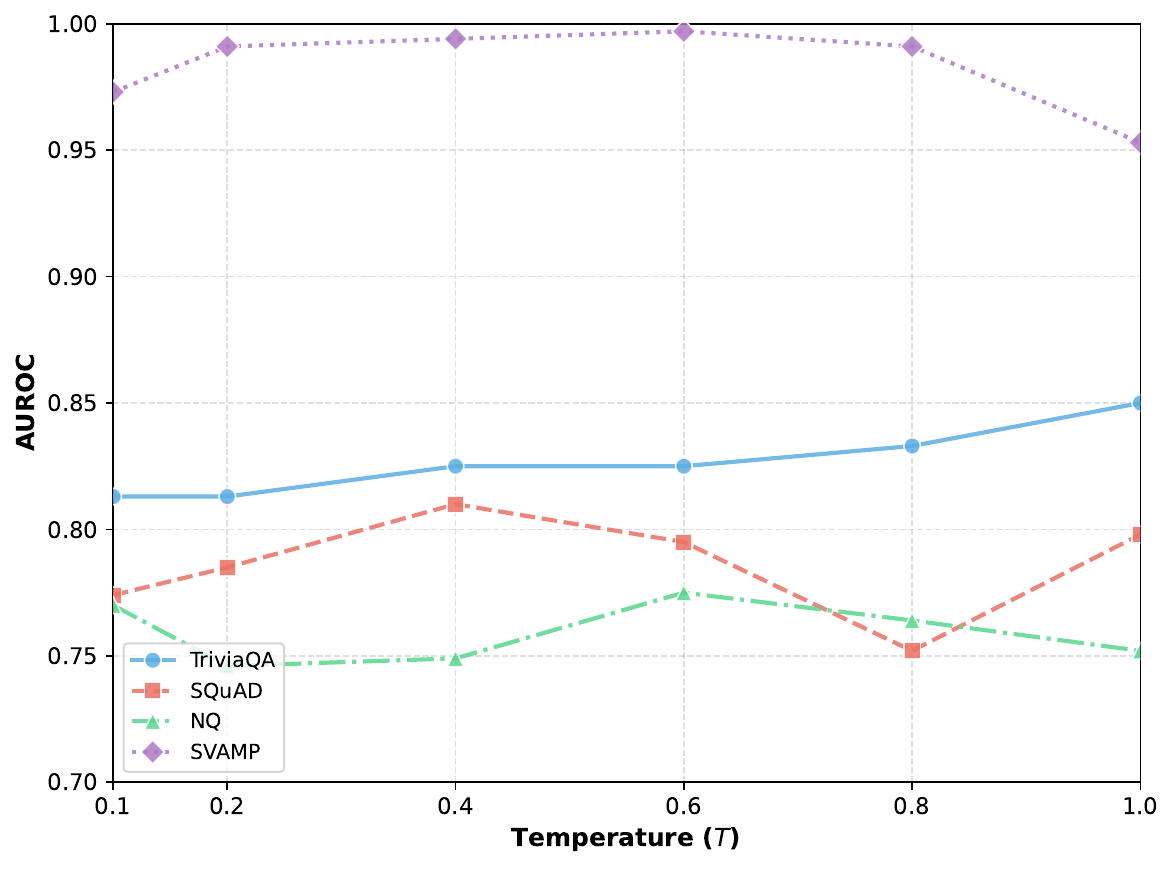}
    \caption{Effect of LLM decoding temperature ($T$) on EPGS hallucination detection performance (Llama-3-8B). The metric demonstrates high stability across the stochastic spectrum, maintaining robust detection capabilities regardless of whether generation is near-deterministic ($T=0.1$) or highly stochastic ($T=1.0$).}
    \label{fig:ablation_temperature}
\end{figure}

\subsection{Ablation Study: Calibration and Noise Ratio}

To provide a normalized interpretation of our embedding perturbation, we analyze the scale of the injected noise relative to the original model representations. Table~\ref{tab:noise_ratio} details the ratio of the aggregate noise norm ($\|\delta\|_2$) to the original embedding norm ($\|\mathbf{E}\|_2$) on the TriviaQA dataset.

Due to the extremely high dimensionality of the embedding space ($d \ge 4096$), our default perturbation magnitude ($\epsilon=0.1$) results in an aggregate stochastic shift that is significantly larger than the original embedding magnitude across all models. For instance, on Llama-3-8B, the aggregate noise norm is over $14\times$ larger ($1414\%$) than the clean embedding norm.

Remarkably, despite these massive shifts in the absolute latent space, the detection performance (AUROC) of EPGS remains exceptionally stable. As previously demonstrated in Table~\ref{tab:ablation_noise}, performance fluctuates by less than 2\% across noise magnitudes $\epsilon \in [0.001, 1]$. This reveals a profound insight into the topology of the loss landscape: the geometric ``sharpness'' we probe is not a microscopic, local artifact, but a fundamental, macroscopic topological feature.

The narrow, steep-walled basins characteristic of stubborn hallucinations are so dominant that they define the local landscape even under high-magnitude stochastic stress. Conversely, the flat basins of robust facts are wide enough to absorb these massive perturbations without inducing significant gradient spikes. This extreme robustness confirms that EPGS requires virtually no per-model hyperparameter tuning; the geometric distinction between facts and stubborn errors persists across several orders of magnitude of applied noise.

\begin{table}[h]
\caption{Ratio of the aggregate noise norm ($\|\delta\|_2$) to the original embedding norm ($\|\mathbf{E}\|_2$) on the TriviaQA dataset across different noise magnitudes ($\epsilon$). Due to high dimensionality, the aggregate perturbation can exceed the original embedding magnitude by orders of magnitude.}
\label{tab:noise_ratio}
\centering
\begin{small}
\begin{tabular}{lccc}
\toprule
$\epsilon$ & \textbf{Llama-2-7B} & \textbf{Llama-3-8B} & \textbf{Mistral-7B} \\
\midrule
$0.001$ & 8.46\%   & 14.13\%   & 47.11\% \\
$0.01$  & 84.55\%  & 141.28\%  & 471.06\% \\
$0.1$   & 846.00\% & 1414.00\% & 4714.85\% \\
\bottomrule
\end{tabular}
\end{small}
\end{table}

\subsection{Ablation Study: Computational Efficiency and Complexity Analysis}

\begin{table*}[t]
\caption{Efficiency comparison of different hallucination detection methods. We report Latency and Peak VRAM across multiple models on TriviaQA, alongside Latency on TruthfulQA.}
\label{tab:resources}
\centering
\resizebox{\textwidth}{!}{
\begin{tabular}{llccccccc}
\toprule
\multirow{3}{*}{\textbf{Type}} & \multirow{3}{*}{\textbf{Method}} & \multicolumn{6}{c}{\textbf{TriviaQA}} & \textbf{TruthfulQA} \\
\cmidrule(lr){3-8} \cmidrule(lr){9-9}
 & & \multicolumn{3}{c}{Latency (seconds)} & \multicolumn{3}{c}{Peak VRAM (GB)} & Latency (seconds) \\
\cmidrule(lr){3-5} \cmidrule(lr){6-8} \cmidrule(lr){9-9}
 & & Llama-2-7B & Llama-3-8B & Mistral-7B & Llama-2-7B & Llama-3-8B & Mistral-7B & Llama-3-8B \\
\midrule
\multicolumn{2}{l}{EPGS (Ours)} & 263.26 & 273.91 & 294.44 & 15.59 & 18.04 & 16.74 & 348.75 \\
\midrule
\multirow{3}{*}{Forward only} 
 & LNE & \multirow{3}{*}{123.29} & \multirow{3}{*}{112.96} & \multirow{3}{*}{116.97} & \multirow{3}{*}{13.95} & \multirow{3}{*}{14.55} & \multirow{3}{*}{13.97} & \multirow{3}{*}{215.35} \\
 & SE  & & & & & & & \\
 & DSE & & & & & & & \\
\midrule
\multirow{2}{*}{White box} 
 & ER  & \multirow{2}{*}{2846.51} & \multirow{2}{*}{2776.18} & \multirow{2}{*}{2783.12} & \multirow{2}{*}{14.90} & \multirow{2}{*}{17.64} & \multirow{2}{*}{15.86} & \multirow{2}{*}{3605.59} \\
 & ES  & & & & & & & \\
\bottomrule
\end{tabular}
}
\end{table*}

To assess practical viability, we analyze EPGS's computational overhead. Unlike sampling-based methods that require multiple generated trajectories~\cite{farquhar2024detecting}, EPGS operates on a single generation. Its overhead stems from dual forward and backward passes (clean and perturbed). Crucially, by restricting gradient computation to the final transformer block ($\theta_{\text{last}}$), the backward-pass cost is independent of model depth. This $\mathcal{O}(1)$ depth scaling avoids the prohibitive $\mathcal{O}(P^2)$ complexity of full Hessian or spectral approximations.

Empirical results in Table~\ref{tab:resources} substantiate this efficiency. On TriviaQA, while the backward passes add moderate latency compared to forward-only metrics like LNE, EPGS is an order of magnitude faster than white-box spectral methods (ER and ES). For instance, ER and ES require upwards of 2800s on Llama-2-7B, whereas EPGS requires only 263s.

Furthermore, EPGS exhibits highly favorable scaling on complex, long-form outputs. Transitioning from short answers (TriviaQA) to long generations (TruthfulQA) on Llama-3-8B, the latency of forward-only LNE increases by 91\% (113s $\to$ 215s) due to compounded autoregressive costs. In contrast, EPGS latency grows by only 27\% (274s $\to$ 349s). This shrinking relative overhead is driven by Selective Factual Probing (Phase 1), which ensures computational costs scale strictly with key entity claims rather than the total word count.

Finally, storing final-block gradients requires only a marginal, sub-linear increase in VRAM (13.95 GB $\to$ 15.59 GB for Llama-2-7B). Consequently, EPGS remains fully deployable on single consumer-grade GPUs (RTX 4090 as used in this paper), securing deep geometric insights without prohibitive resource demands.

\section{Qualitative Result}

\subsection{Qualitative Comparison: EPGS vs. Entropy Metrics}

To provide deeper intuition into why predictive stability fails while geometric curvature succeeds on stubborn hallucinations, we present a side-by-side qualitative comparison in Figure~\ref{fig:qualitative}. We contrast our EPGS against leading entropy-based metrics: Length-Normalized Entropy (LNE) and Discrete Semantic Entropy (DSE).

A fundamental limitation of output-dependent methods like LNE and DSE is their inherent hypersensitivity to ``paraphrase noise.'' Minor syntactic variations in the generation trajectory, such as changes in capitalization, the inclusion of filler words, or the use of synonyms (e.g., \textit{``Golf''} vs. \textit{``golf''} vs. \textit{``A good golf player''}), can artificially inflate entropy scores despite the semantic meaning remaining entirely stable. This syntactic volatility causes traditional metrics to fluctuate wildly, rendering them unreliable for isolating deeply entrenched factual errors.

By contrast, EPGS bypasses this ``fluency bottleneck'' entirely by operating on the dense latent representations and computing the gradient sensitivity in the final transformer block. As demonstrated in Figure~\ref{fig:qualitative}, when the model encounters a stubborn hallucination (a sharp minimum), EPGS correctly identifies a massive $6$--$18\times$ increase in gradient magnitude. This confirms that EPGS effectively captures the underlying geometric ``sharpness'' of memorized falsehoods, providing a robust detection signal even when the surface-level output probabilities remain overconfident and stable.

\begin{figure}
    \centering
    \includegraphics[width=\linewidth]{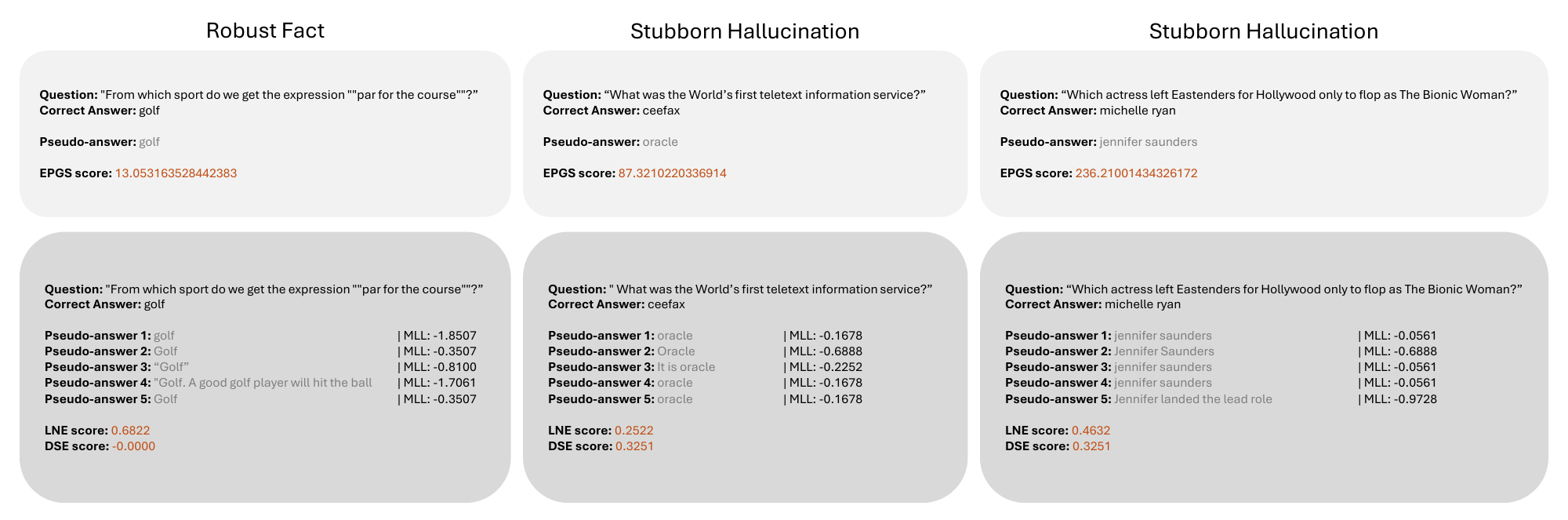}
    \caption{Side-by-side qualitative comparison of EPGS against entropy-based metrics (LNE and DSE).}
    \label{fig:qualitative}
\end{figure}

\subsection{Qualitative Sample: TriviaQA ``Stubborn'' subset on Meta-Llama3-8B}

Figure~\ref{fig:curvature_examples} provides further qualitative examples drawn specifically from the Stubbornness Subset of TriviaQA using Meta-Llama3-8B.

\begin{figure}[ht]
    \centering
    
    % Example 1
    \begin{resultbox}
        \textbf{Question:} The Flying Pickets were a British vocal group who had Christmas no1 hit in 1983. What was the title of the song.\\
        \textbf{Correct Answer:} only you\\
        \textbf{Pseudo-answer:} only you\\
        \textbf{Curvature Score:} 11.771678924560547
    \end{resultbox}
    \vspace{0.2cm} % Space between boxes

    % Example 2
    \begin{resultbox}
        \textbf{Question:} What was the World’s first teletext information service?\\
        \textbf{Correct Answer:} ceefax\\
        \textbf{Pseudo-answer:} oracle\\
        \textbf{Curvature Score:} 87.3210220336914
    \end{resultbox}
    \vspace{0.2cm}

    % Example 3
    \begin{resultbox}
        \textbf{Question:} What is the lowest level of the Earth's atmosphere?\\
        \textbf{Correct Answer:} troposphere\\
        \textbf{Pseudo-answer:} troposphere\\
        \textbf{Curvature Score:} 19.5494384765625
    \end{resultbox}
    \vspace{0.2cm}

    % Example 4
    \begin{resultbox}
        \textbf{Question:} What certificate is often earned after graduating high school?\\
        \textbf{Correct Answer:} elementary school education certificate\\
        \textbf{Pseudo-answer:} GED\\
        \textbf{Curvature Score:} 102.38825225830078
    \end{resultbox}
    \vspace{0.2cm}
    
    % Example 5
    \begin{resultbox}
        \textbf{Question:} What is the number of Constituency MSPs?\\
        \textbf{Correct Answer:} 73\\
        \textbf{Pseudo-answer:} 73\\
        \textbf{Curvature Score:} 3.849358558654785
    \end{resultbox}

    \caption{Qualitative examples showing the Question, Correct Answer, Pseudo-answer, and the calculated Curvature Score.}
    \label{fig:curvature_examples}
\end{figure}

\section{Further Discussion}

\subsection{Validity of Hallucination Labelling}

We acknowledge that our labelling strategy relies on determining factual correctness via reference comparison rather than human evaluation. While some definitions separate ``reasoning errors'' from ``hallucinations'', we posit that in the context of knowledge retrieval, this distinction is topologically irrelevant. Whether caused by a failure in logic or a failure in retrieval, a Stubborn Hallucination is defined by the model's convergence to an incorrect state with high stability.

Standard static benchmarks fail to capture this phenomenon because they decouple the error from the generation process. Because hallucination is not a property of the dataset but a property of the model-data interaction, there is no single ``Hallucination Set'' that contains the correct pattern of error for every LLM. For example, Llama-2 may hallucinate on a specific entity in TriviaQA while Mistral-7B retrieves it correctly; a static label would fail to distinguish these states. Therefore, our method of hallucination labelling using \texttt{BertScore} or \texttt{SQuad-F1} treat the model's own incorrect, stable generations as the positive class for hallucination is necessary. It allows us to isolate the specific ``sharp minima'' that correspond to that specific model's memorized errors, providing a more rigorous test bed for the EPGS metric than pre-fabricated adversarial samples.

\subsection{Theoretical Validity of the Curvature Hypothesis}

A core premise of our work (Hypothesis~\ref{hyp:curvature}) is that robust factual knowledge resides in ``flat minima'' of the loss landscape, while stubborn hallucinations occupy ``sharp minima,'' a view grounded in classical generalization theory~\cite{10.1162/neco.1997.9.1.1,keskar2017on}. However, we must address a prominent counter-narrative in deep learning theory: the work of~\cite{pmlr-v70-dinh17b}, which argues that sharp minima can generalize. They demonstrate that Hessian-based sharpness measures are not invariant to re-parameterization; for strictly non-negative homogeneous networks, one can arbitrarily rescale weights to increase the spectral norm of the Hessian without altering the function's output. Theoretically, this implies that a sharp minimum can be mathematically equivalent to a flat one, challenging the universality of the link between curvature and generalization. While this critique is mathematically sound regarding architectural invariance, we argue that it does not invalidate our hypothesis within the specific context of analyzing fixed, pre-trained Large Language Models.

The invariance argument put forth by~\cite{pmlr-v70-dinh17b} relies on the ability to re-parameterize the network post-hoc to manipulate the geometry. In contrast, our method operates on fixed pre-trained checkpoints where the parameterization is frozen and the coordinate system is determined by the specific trajectory of the optimizer (e.g., Adam). Within this fixed coordinate system, extensive empirical studies have shown that sharpness measures remain highly predictive of generalization error~\cite{Jiang*2020Fantastic}. By holding the architecture and weights constant, we are not comparing sharpness across different models or parameter scales, but rather probing the relative curvature differences between specific data points (facts vs. hallucinations) within the same model. In this ``inference-only'' setting, the geometric distinction serves as a valid proxy for the robustness of the learned representation.

Furthermore, we posit that the distinction between facts and stubborn hallucinations is rooted in the mechanism of information storage described by information geometry. Generalized knowledge is characterized by feature redundancy, where the model relies on multiple distributed circuits to encode a concept, resulting in a wide basin of attraction where individual parameter perturbations are dampened. Conversely, stubborn hallucinations mimic the behavior of memorized noise. Theoretical work by~\cite{arpit2017closer, feldman2020does} suggests that while deep networks learn simple, generalized patterns in stable regions, they ``cram'' noisy or outlier data into high-curvature regions to satisfy the training objective. In this context, ``sharpness'' is not merely a geometric artifact subject to re-parameterization, but a signature of overfitting to a heuristic without the support of generalized features.

Finally, while the debate regarding parameter-space sharpness is nuanced, our method explicitly measures input-embedding sensitivity, which offers a semantically unambiguous metric for LLMs. If a prediction is stable only within a microscopic radius of the input embedding (high sensitivity), the model fails to possess semantic robustness regardless of the underlying weight scaling. The EPGS score reflects this lack of margin. By linking this input instability to parameter curvature via the Jacobian (Lemma~\ref{lem:equal}), we utilize sharpness not as an absolute measure of generalization capacity, but as a differential diagnostic tool to separate robust concepts from brittle memorization.

%%%%%%%%%%%%%%%%%%%%%%%%%%%%%%%%%%%%%%%%%%%%%%%%%%%%%%%%%%%%%%%%%%%%%%%%%%%%%%%
%%%%%%%%%%%%%%%%%%%%%%%%%%%%%%%%%%%%%%%%%%%%%%%%%%%%%%%%%%%%%%%%%%%%%%%%%%%%%%%

\end{document}